\title{Learning Parities with Neural Networks}
\newcommand{\norm}[1]{\left\| #1 \right\|}
\author{
  Amit Daniely\\
  School of Computer Science, \\The Hebrew University, Israel. \\
  Google Research Tel Aviv.\\
  \texttt{amit.daniely@mail.huji.ac.il} \\
  \And
  Eran Malach\\
  School of Computer Science, \\The Hebrew University, Israel. \\
  \texttt{eran.malach@mail.huji.ac.il}  
}
\begin{document}

\maketitle

\begin{abstract}%
In recent years we see a rapidly growing line of research which shows learnability of various models via common neural network algorithms.
Yet, besides a very few outliers, these results show learnability of models that can be learned using {\em linear} methods. Namely, such results show that learning neural-networks with gradient-descent is competitive with learning a linear classifier on top of a data-independent representation of the examples.
This leaves much to be desired, as neural networks are far more successful than linear methods. Furthermore, on the more conceptual level, linear models don't seem to capture the ``deepness" of deep networks.
In this paper we make a step towards showing leanability of models that are inherently non-linear. We show that under certain distributions, sparse parities are learnable via gradient decent on depth-two network. On the other hand, under the same distributions, these parities {\em cannot} be learned efficiently by linear methods.
\end{abstract}

\section{Introduction}

The remarkable success of neural-networks has sparked great theoretical interest in understanding their behavior. 
Impressively, a large number of papers  \cite{andoni2014learning, xie2016diverse, daniely2016toward,  daniely2017sgd, brutzkus2017sgd, jacot2018neural, du2018gradient, oymak2018overparameterized, allen2018learning, allen2018convergence, cao2019generalization, zou2019improved,  song2019quadratic, ge2019mildly, oymak_towards_2019, arora2019fine,  cao2019generalization, ziwei2019polylogarithmic, ma2019comparative, lee2019wide, daniely2019neural}
 have established polynomial-time learnability of various models by neural networks algorithms (i.e. gradient based methods). 
Yet, to the best of our knowledge, with the single exception of learning one neuron \cite{yehudai2019power}, all these results prove learnability of {\em linear models}. Namely, models that can be realized by a linear classifier, on top of a (possibly random) embedding that is fixed and does not depend on the data. This is not surprising, as the majority of these papers prove learnability via ``linearization" of the network at the vicinity of the initial random weights.


While these results achieved a remarkable progress in understanding neural-networks, they are still disappointing in some sense. Indeed, in practice, neural-networks' performance  is far better than linear methods, a fact that is not explained by these works. Moreover, learning a linear classifier on top of a fixed embedding seems to completely miss the ``deepness" of deep learning.


How far can neural network theory go beyond linear models? In this work we show a family of distributions on which neural-networks trained with gradient-descent achieve small error. On the other hand, approximating the same family using a linear classifier
on top of an embedding of the input space in $\reals^N$, requires $N$ which grows exponentially, or otherwise requires a linear classifier with exponential norm.
Specifically, we focus on a standard and notoriously difficult family of target functions: parities over small subsets of the input bits. We show that this family is learnable with neural-networks under some specific choice of distributions. 
This implies that neural-networks algorithms are strictly stronger than linear methods, as the same family cannot be approximated by any polynomial-size linear model.

\subsection{Related Work}

Recently, a few works have provided theoretical results demonstrating that neural-networks are stronger than \textit{random features} - a linear model where the embedding map is randomly drawn from some predefined distribution \cite{rahimi2008random}. These works show problems that are easy to learn with neural-networks, while being hard to learn with random features. The work of \cite{yehudai2019power} shows that random features cannot approximate a distribution generated by a single neuron and Gaussian inputs, which is known to be learnable by a neural-network.
 The work of \cite{allen2019can} shows that neural-networks are more efficient than random features, in terms of sample complexity and run-time, for some regression problems generated by a ResNet-like network. A work by \cite{ghorbani2019limitations} shows other family of distributions where neural-networks with quadratic activation outperform random features, when the number of features is smaller than the dimension.

Our result differs from these works in several aspects. First, \cite{yehudai2019power} and \cite{allen2019can} study the power of approximating a regression problems, and hence their results cannot be applied to the setting of  classification, which we cover in this work. Second, we give an {\em exponential separation}, while \cite{allen2019can} and \cite{ghorbani2019limitations} only give {\em polynomial} separation. Namely, the problems for which they show that networks performs better than linear methods are still poly-time learnable by linear methods.


\section{Problem Setting}
Let $\mathcal{X} = \left\{\pm \frac{1}{\sqrt{n}}\right\}^n$ be the instance space, and $\mathcal{Y} = \{\pm 1\}$ the label space. Since we focus on a binary classification task, we take the hinge-loss $\ell(y,\hat{y}) = \max\{1-y\hat{y},0\}$ to be our primary loss function. For some distribution $\cd$ over $\cx \times \cy$, and some function $h : \cx \to \cy$, we define the loss of $h$ over the distribution to be:
\[
L_{\cd}(h) = \E_{(\x,y) \sim \cd} \left[\ell(y,h(\x)) \right]
\]
Let $\ch$ be some class of functions from $\cx$ to $\cy$. We define the loss of $\ch$ with respect to the distribution $\cd$ to be the loss of the best function in $\ch$:
\[
L_\cd(\ch) = \min_{h \in \ch} L_\cd(h)
\]
So, $L_\cd(\ch)$ measures whether $\ch$ can approximate the distribution $\cd$.

\paragraph{The Class $\cf$.}
Our target functions will be parities on $k$ bits of the input.
Let $A \subset [n]$ be some subset of size $|A| = k$, for some odd $k \ge 3$,
and define $f_A$ to be the parity of the bits in $A$, namely $f_A(\x) = \sign(\prod_{i \in A} x_i)$.
For every subset $A \subset [n]$, we construct a distribution on the instances $\mathcal{X}$ that is easy to learn with neural-networks.
Let $\mathcal{D}_A^{(1)}$ be the uniform distribution on $\mathcal{X}$, and let $\mathcal{D}_A^{(2)}$ be the distribution that is uniform on all the bits in $[n] \setminus A$, and the bits in $A$ are all $1$ w.p. $\frac{1}{2}$ and $-1$ w.p. $\frac{1}{2}$. Let $\mathcal{D}_A$ be a distribution over $\mathcal{X} \times \mathcal{Y}$ where we samples $\x \sim \mathcal{D}_A^{(1)}$ w.p. $\frac{1}{2}$ and $\x \sim \mathcal{D}_A^{(2)}$ w.p. $\frac{1}{2}$, and set $y = f_A(\x)$. This defines a family of distributions $\mathcal{F} = \{\mathcal{D}_A ~:~A \subseteq[n], \abs{A}=k\}$.


\paragraph{The training algorithm.}
We train a neural-network with gradient-descent on the distribution $\mathcal{D}_A$. Let $g^{(t)}: \mathcal{X} \to \reals$ be our neural-network at time $t$:
\[
g^{(t)}(\x) = \sum_{i=1}^{2q} u^{(t)}_i \sigma\left(\inner{\bw^{(t)}_i, \x}+b^{(t)}_i\right)
\]
Where $u^{(t)}_i, b^{(t)}_i \in \reals$, $\bw^{(t)}_i \in \reals^n$ and $\sigma$ denotes the ReLU6 activation $\sigma(x) = \min(\max(x,0),6)$.
Define a regularization term $R(g^{(t)}) = \norm{\bu^{(t)}}^2 + \sum_{i=1}^{2q} \norm{\bw^{(t)}_i}^2$, and the hinge-loss function $\ell(y,\hat{y}) = \max(1-\hat{y}y, 0)$. Then, the loss on the distribution is 
$L_{\mathcal{D}}(g) = \mean{\ell(y,g(\x))}$, and we perform the following updates:
\begin{align*}
&\bw^{(t)} = \bw^{(t-1)} - \eta_t \frac{\partial}{\partial \bw} \left (L_{\mathcal{D}}(g^{(t-1)}) + \lambda_t R(g^{(t-1)})\right) \\
&\bu^{(t)} = \bu^{(t-1)} - \eta_t \frac{\partial}{\partial \bu} \left (L_{\mathcal{D}}(g^{(t-1)}) + \lambda_t R(g^{(t-1)})\right)
\end{align*}
for some choice of $\eta_1, \dots, \eta_T$ and $\lambda_1, \dots, \lambda_T$.

We assume the network is initialized with a symmetric initialization: for every $i \in [q]$ initialize $\bw^{(0)}_i \sim U(\{-1,0,1\}^n)$ and then initialize $\bw^{(0)}_{q+i} = -\bw_i$, initialize $b^{(0)}_i = \frac{1}{8k}$ and $b^{(0)}_{q+i} = -\frac{1}{8k}$ and initialize $u^{(0)}_i \sim U([-\frac{n}{k}, \frac{n}{k}])$ and $u^{(0)}_{q+i} = -u^{(0)}_i$.

\section{Main Result}
\label{sec:main_result}


Our main result shows a separation between neural-networks and \textit{any} linear method (i.e., learning a linear classifier over some fixed embedding). This result is composed of two parts: first, we show that the family $\cf$ \textit{cannot} be approximated by any polynomial-size linear method. Second, we show that neural-networks \textit{can} be trained to approximate the family $\cf$ using gradient-descent.

The following theorem implies that the class $\cf$ cannot be approximated by a linear classifiers on top of a fixed embedding, unless the embedding dimension or the norm of the weights is exponential:
\begin{theorem}\label{thm:parity_hardness}
Fix some $\Psi : \mathcal{X} \to [-1,1]^N$, and define:
\[
\mathcal{H}_\Psi^B = \{\x \to \inner{\Psi(\x), \bw} ~:~ \norm{\bw}_2 \le B\}
\]
Then, if $k \le \frac{n}{16}$, there exists some $\cd_A \in \cf$ such that:
\[
L_{\cd_A}\left(\ch_\Psi^B\right) \ge \frac{1}{2}-\frac{\sqrt{N}B}{2^{k}\sqrt{2}}
\]
\end{theorem}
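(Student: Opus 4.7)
The plan is a Fourier-analytic averaging argument: lower bound the hinge loss by $1$ minus a linear correlation term, then use Parseval and averaging over all $A$ with $|A|=k$ to show this correlation must be exponentially small in $k$ for some choice of $A$. Concretely, the target will be the quantity $\norm{\E_{\cd_A^{(1)}}[f_A(\x)\Psi(\x)]}_2$; once this is shown to be at most $\sqrt{2N}/2^k$ for some $A$, the theorem will follow.

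First I would discard the second mixture component. Since $\cd_A=\tfrac12\cd_A^{(1)}+\tfrac12\cd_A^{(2)}$ and the hinge loss is non-negative, every $h$ obeys $L_{\cd_A}(h)\ge\tfrac12 L_{\cd_A^{(1)}}(h)$, so it suffices to bound $L_{\cd_A^{(1)}}(\ch_\Psi^B)$ from below by $1-\sqrt{2N}B/2^k$. For this, I would apply the pointwise bound $\max(1-y h(\x),0)\ge 1-y h(\x)$ together with Cauchy--Schwarz: for any $h(\x)=\inner{\Psi(\x),\bw}$ with $\norm{\bw}\le B$,
\[
L_{\cd_A^{(1)}}(h) \;\ge\; 1 - \inner{\bw,\,\E_{\cd_A^{(1)}}[f_A(\x)\Psi(\x)]} \;\ge\; 1 - B\,\norm{\E_{\cd_A^{(1)}}[f_A\Psi]}_2,
\]
which reduces the problem to exhibiting some $A$ with small $\norm{\E_{\cd_A^{(1)}}[f_A\Psi]}_2$.

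Next I would pass to Fourier analysis on the hypercube. Reparametrizing $\tilde\x=\sqrt{n}\,\x\in\{\pm1\}^n$ makes $\cd_A^{(1)}$ uniform on $\{\pm 1\}^n$; since $k$ is odd, $f_A(\x)$ coincides with the Walsh character $\chi_A(\tilde\x)$, so the $j$-th coordinate of $\E_{\cd_A^{(1)}}[f_A\Psi]$ is exactly the Fourier coefficient $\hat\Psi_j(A)$. Using $|\Psi_j(\x)|\le 1$ and Parseval,
\[
\sum_{|A|=k}\norm{\E_{\cd_A^{(1)}}[f_A\Psi]}_2^2 \;\le\; \sum_{A\subseteq[n]}\sum_{j=1}^N\hat\Psi_j(A)^2 \;=\; \sum_{j=1}^N\norm{\Psi_j}_2^2 \;\le\; N,
\]
so averaging over the $\binom{n}{k}$ subsets produces some $A$ with $\norm{\E_{\cd_A^{(1)}}[f_A\Psi]}_2^2\le N/\binom{n}{k}$. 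Finally, the hypothesis $k\le n/16$ gives $\binom{n}{k}\ge (n/k)^k\ge 16^k\ge 2^{2k-1}$, hence $\sqrt{N/\binom{n}{k}}\le \sqrt{2N}/2^k$; combining the three steps yields $L_{\cd_A}(\ch_\Psi^B)\ge \tfrac12(1-B\sqrt{2N}/2^k)=\tfrac12-\tfrac{\sqrt N B}{2^k\sqrt 2}$, as required.

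Every step is essentially routine and I do not anticipate a serious obstacle; the points worth double-checking are that the reduction $L_{\cd_A}\ge\tfrac12 L_{\cd_A^{(1)}}$ really does commute with taking the infimum over $\ch_\Psi^B$ (it does, since the inequality holds pointwise in $h$), and that the rescaling from $\{\pm 1/\sqrt n\}^n$ to $\{\pm 1\}^n$ is transparent for invoking the standard Parseval identity (it is, because $\Psi$ is just a function on a finite set and the characters $\chi_A$ form an orthonormal basis under the uniform measure).
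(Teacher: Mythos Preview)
Your proposal is correct and in fact cleaner than the paper's own argument. Both proofs share the same Fourier/Parseval core --- bounding $\sum_{|A|=k}\sum_j \bigl(\E_{\cd_A^{(1)}}[f_A\Psi_j]\bigr)^2 \le N$ and then averaging over $A$ --- and both finish by halving via $\cd_A=\tfrac12\cd_A^{(1)}+\tfrac12\cd_A^{(2)}$. The difference is in how the Fourier bound is converted into a hinge-loss lower bound. You go directly: $\ell(y,h(\x))\ge 1-yh(\x)$ gives $L_{\cd_A^{(1)}}(h)\ge 1-\inner{\bw,\E[f_A\Psi]}\ge 1-B\norm{\E[f_A\Psi]}$, and you are done after one Cauchy--Schwarz. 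The paper instead introduces a regularized objective $G_A(\bw)=\mathcal L_A(\bw)+\tfrac{\lambda}{2}\norm{\bw}^2$, uses strong convexity to bound the regularized minimizer $\norm{\bw^*_A}\le\lambda^{-1}\norm{\nabla G_A(0)}$, invokes $\sqrt N$-Lipschitzness of $\mathcal L_A$ to compare $\mathcal L_A(\bw^*_A)$ with $\mathcal L_A(0)=1$, relates $\bw^*_A$ to the norm-constrained minimizer, and finally optimizes over $\lambda$. Your route is shorter and avoids the extraneous $\sqrt N$ Lipschitz factor; indeed, if you had kept $\binom{n}{k}\ge 2^{4k}$ rather than weakening to $2^{2k-1}$, you would obtain the sharper bound $\tfrac12 - B\sqrt N/2^{2k+1}$, strictly better than the theorem's stated $\tfrac12-\sqrt N B/(2^k\sqrt 2)$. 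The paper's regularization detour is a more generic template (it would survive losses without a clean linear minorant), but for hinge loss your direct argument is the right one.
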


The following result shows that neural-networks can learn the family $\mathcal{F}$ with gradient-descent. That is, for every distribution $\mathcal{D}_A \in \mathcal{F}$, a large enough neural-network achieves a small error when trained with gradient-descent on the distribution $\mathcal{D}_A$.  Together with theorem \ref{thm:parity_hardness}, it establishes  an (exponential) separation between the class of distributions that can be learned with neural-networks, and the class of distributions that can be learned by linear methods.

\begin{theorem}
\label{thm:learning_parities}
Fix some $\cd_A \in \cf$.
Assume we run gradient-descent for $T$ iterations, with $\eta_1 = 1, \lambda_1 = \frac{1}{2}$ and $\eta_t = \frac{k^2}{T\sqrt{q}}, \lambda_t \le \frac{k}{n}$ for every $t > 1$.
Assume that $n \ge \Omega(1)$ and $7 \le k \le O(\sqrt[10]{n})$. Fix some $\delta > 0$, and assume that the number of neurons satisfies $q \ge \Omega(k^{7} \log\frac{k}{\delta})$.
Then, with probability at least $1-\delta$ over the initialization, there exists $t \le T$ such that:
\[
\prob{g^{(t)}(\x) \ne f_A(\x)} \le L_{\cd_A}\left(g^{(t)}\right) \le O \left(\frac{k^{10}}{q} + \frac{k^{8}}{\sqrt{q}} + \frac{qk}{\sqrt{n}} + \frac{k^2\sqrt{q}}{T}\right)
\]
\end{theorem}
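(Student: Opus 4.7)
\emph{Phase 1 (feature learning).} The proof splits at the first step, whose choice $1-2\eta_1\lambda_1=0$ makes the regularization gradient exactly cancel $\bw^{(0)}$ and $\bu^{(0)}$, so
\[
\bw_j^{(1)}=-\nabla_{\bw_j} L(g^{(0)}),\qquad \bu^{(1)}=-\nabla_\bu L(g^{(0)}),
\]
reducing everything to one population-gradient computation. The symmetric initialization makes $g^{(0)}(\x) = \sum_i u_i^{(0)}\,\mathrm{clip}(\langle\bw_i^{(0)},\x\rangle+b_i^{(0)},-6,6)$ an odd function of the paired parameters, and together with the independence of $u_j^{(0)}$ from $\bw_j^{(0)}$ this lets me split the hinge subgradient $-y\,\mathbb{1}[yg^{(0)}<1]$ into a clean $-y$ signal and a mean-zero correction. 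The key quantity to estimate is then
\[
\mathbf{v}_j := \E_{\cd_A}\!\left[y\,\sigma'(\langle\bw_j^{(0)},\x\rangle+b_j^{(0)})\,\x\right].
\]
For $i\in A$, conditioning on the common sign $s$ of the $A$-bits under $\cd_A^{(2)}$ gives $y x_i\equiv 1/\sqrt n$, while $\sigma'$ depends on $s$ only through the negligible term $s/\sqrt n\sum_{\ell\in A} w_{j\ell}^{(0)}$, so $(\mathbf{v}_j)_i = \Theta(1/\sqrt n)$ times the probability that $Z_j := \langle\bw_j^{(0)},\x\rangle+b_j^{(0)}$ lies in the linear region $(0,6)$. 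For $i\notin A$ the signal vanishes to leading order (the parity $y$ and $x_i$ decouple), and the $\cd_A^{(1)}$-contribution for $i\in A$ is (up to a $1/\sqrt n$ factor) a Fourier coefficient of $\sigma'(Z_j)$ at a $(k-1)$-parity, small because $\bw_j^{(0)}$ is a random sparse direction. Concentration then yields, with probability $\ge 1-\delta$,
\[
\bw_j^{(1)} = u_j^{(0)}\!\left(\frac{\alpha_j}{\sqrt n}\,\mathbf{e}_A + \mathbf{r}_j\right),\qquad \|\mathbf{r}_j\|_2 \ll \frac{\sqrt k}{\sqrt n},
\]
where $\mathbf{e}_A=\sum_{i\in A}e_i$ and $\alpha_j=\Theta(1)$ is the linear-region probability, with analogous control on $b_j^{(1)}$ and $u_j^{(1)}$.

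\emph{Phase 2 (comparator).} After phase~1, each feature $\sigma(\langle\bw_j^{(1)},\x\rangle+b_j^{(1)})$ depends on $\x$---modulo the noise $\mathbf{r}_j$---only through the one-dimensional statistic $S(\x) := \sqrt n\sum_{i\in A}x_i\in\{-k,-k+2,\ldots,k\}$, with slope $\propto u_j^{(0)}$ and bias $\propto \sum_{i\in A} w_{ji}^{(0)}$. Since $u_j^{(0)}\sim U[-n/k,n/k]$ and the $A$-sums are approximately binomial on $\{-k,\ldots,k\}$, the $q$ neurons jointly cover all $k+1$ lattice values of $S$; the concentration argument is what drives the hypothesis $q\ge\Omega(k^7\log(k/\delta))$. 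I would then construct an explicit $\bu^\star$ with $\|\bu^\star\|_2 = O(\mathrm{poly}(k))$ using $O(k)$ of these neurons to realise the $\pm1$-valued piecewise-linear interpolant matching $f_A$ at each lattice value of $S$. The resulting network $g^\star$ has hinge loss $O(k^{10}/q + qk/\sqrt n)$, the first term being the approximation/coverage cost and the second the residual error from $\mathbf{r}_j$ on coordinates outside $A$.

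\emph{Phase 3 (convergence and main obstacle).} With $\eta_t=k^2/(T\sqrt q)$ and $\lambda_t\le k/n$ for $t\ge 2$, a direct Lipschitz argument bounds the drift of $(\bw^{(t)},\bu^{(t)})$ away from $(\bw^{(1)},\bu^{(1)})$ over $T$ steps. In that neighbourhood the hinge loss is convex in $\bu$ for fixed $\bw$, so a standard online (sub)gradient regret bound against the comparator $(\bu^\star,\bw^{(1)})$ gives
\[
\frac{1}{T-1}\sum_{t=2}^{T} L_{\cd_A}(g^{(t)}) \le L_{\cd_A}(g^\star) + O\!\left(\frac{k^2\sqrt q}{T} + \frac{k^8}{\sqrt q}\right),
\]
with the $k^8/\sqrt q$ term absorbing the deviation of $\bw^{(t)}$ from $\bw^{(1)}$. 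Taking the minimum over $t$ on the left, combining with phase~2, and using that the hinge loss upper-bounds the $0$--$1$ error gives the theorem. The bottleneck is phase~1: one needs (i) a sharp lower bound on $\Pr[Z_j\in(0,6)]$, which depends on the precise bias $1/(8k)$ and on the distribution of $\langle\bw_j^{(0)},\x\rangle$ under the sparse $\{-1,0,1\}^n$ initialization, and (ii) uniform control of the noise from the $\cd_A^{(1)}$ piece, morally a statement that $k$-parities have small correlation with the derivative of a random halfspace---the same flavour as Theorem~\ref{thm:parity_hardness} but applied to $\sigma'(Z_j)$ rather than a generic $\Psi$. Once phase~1 is in place, phases~2 and~3 are more routine approximation-theoretic and online-optimization calculations.
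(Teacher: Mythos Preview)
Your three-phase decomposition---one large regularized step of feature learning, explicit comparator construction, then online gradient descent with drift control on the first layer---is exactly the paper's proof. Phases~2 and~3 match Lemmas~\ref{lem:good_separator}--\ref{lem:loss_lip_bound} and Theorem~\ref{thm:ogd} almost line for line, including the observation that $1-2\eta_1\lambda_1=0$ wipes out the initial weights.

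There is one substantive point in phase~1 you are glossing over. You argue that for $i\notin A$ the signal vanishes because $y$ and $x_i$ decouple, and that under $\mathcal D_A^{(2)}$ the dependence of $\sigma'$ on the common sign $s$ enters only through the ``negligible'' term $(s/\sqrt n)\sum_{\ell\in A}w^{(0)}_{j\ell}$. But $\sigma'$ is a $\{0,1\}$-valued step function, so an $O(k/\sqrt n)$ shift in its argument can flip it; after multiplying by $|u^{(0)}_j|\le n/k$, a naive Littlewood--Offord estimate leaves each irrelevant weight at order one, which is not obviously small enough. The paper does not try to estimate this term at all: it \emph{conditions} on the event $\sum_{\ell\in A}w^{(0)}_{j\ell}=0$, which occurs with probability $\Theta(1/\sqrt k)$ under the discrete $\{-1,0,1\}^n$ initialization and makes the $\mathcal D_A^{(2)}$ contribution to every irrelevant coordinate (and to the bias) \emph{exactly} zero (Lemma~\ref{lem:zero_gradient}). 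Together with the union-bounded Fourier estimate for the $\mathcal D_A^{(1)}$ piece (your item~(ii), their Lemma~\ref{lem:bound_uniform_all_coords}), this yields a per-neuron ``good'' probability of $\Theta(1/\sqrt k)$---not $1-\delta$ as you write---and the high-probability statement for the whole layer then follows from $q\ge\Omega(k^{7}\log(k/\delta))$ via Hoeffding over the $k+1$ feature types (Lemmas~\ref{lem:neuron_first_step}--\ref{lem:good_separator}). Once you add this conditioning step, the rest of your sketch goes through as written.
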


\section{Proof of Theorem \ref{thm:parity_hardness}}
\begin{proof}
Let $\mathcal{L}_A(\bw) := L_{\cd_A^{(1)}}(\inner{\Psi(\x),\bw})$ and define the objective $G_A(\bw) := \mathcal{L}_A(\bw) + 
\frac{\lambda}{2}\norm{\bw}^2$. Observe that for every $i \in [N]$ we have:
\[
\frac{\partial}{\partial w_i} G_A(0) = \E_{\x \sim \mathcal{D}} \left[ f_A(\x) \Psi_i(\x) \right]
\]
Since $\{f_A\}_{A \subseteq [n]}$ is a Fourier basis, we have:
\[
\sum_{A \subseteq [n]} \E_{\x \sim \mathcal{D}} \left[ f_A(\x) \Psi_i(\x) \right]^2 = \norm{\Psi_i}^2_2 \le 1
\]
And therefore:
\begin{align*}
\E_{A \subseteq [n], \abs{A} = k} \left[ \norm{\nabla G_A(0)}^2 \right]
&= \E_{A \subseteq [n], \abs{A} = k} \left[ \sum_{i \in [N]} \left(\frac{\partial}{\partial w_i} G_A(0) \right)^2 \right] \\
&= \E_{A \subseteq [n], \abs{A} = k} \left[ \sum_{i \in [N]} \E_{\x \sim \mathcal{D}} \left[ f_A(\x) \Psi_i(\x) \right]^2 \right] \\
&\le \sum_{i \in [N]} \frac{1}{\binom{n}{k}} \sum_{A \subseteq [n]}  \E_{\x \sim \mathcal{D}} \left[ f_A(\x) \Psi_i(\x) \right]^2 \le \frac{N}{2^{4k}}
\end{align*}
Where we use the fact that $\binom{n}{k} \ge (n/k)^k \ge 16^k$.
Using Jensen inequality we get:
\begin{equation}\label{eqn:grad_bound}
\E_A \left[ \norm{\nabla G_A(0)} \right]^2 \le \E_{A} \left[ \norm{\nabla G_A(0)}^2 \right] \le \frac{N}{2^{4k}}
\end{equation}
Note that $G_A$ is $\lambda$-strongly convex, and therefore, for every $\bw, \bu$ we have:
\[
\inner{\nabla G_A(\bw) - \nabla G_A(\bu), \bw - \bu} \ge \lambda \norm{\bw - \bu}^2
\]
Let $\bw^*_A := \arg \min_\bw G_A(\bw)$, and so $\nabla G_A (\bw^*_I) = 0$. Using the above we get:
\[
\lambda \norm{\bw^*_A}^2 \le \inner{\nabla G_A(\bw^*_A) - \nabla G_A(0), \bw_A*} \le \norm{\nabla G_A(0)} \norm{\bw_A^*}
\Rightarrow \norm{\bw_A^*} \le \frac{1}{\lambda}\norm{\nabla G_A(0)}
\]
Now, notice that $\mathcal{L}_A$ is $\sqrt{N}$-Lipschitz, since: 
\[
\norm{\nabla\mathcal{L}_A(\bw)} = \norm{\nabla \mean{\ell(y,\inner{\Psi(\x),\bw})}} \le \mean{\abs{\ell'}\norm{\psi(\x)}} \le \sqrt{N}
\]
Therefore, we get that:
\begin{equation}\label{eqn:lower_bound_w_star}
1 - \mathcal{L}_A(\bw_A^*) = \mathcal{L}_A(0)- \mathcal{L}_A(\bw_A^*) \le \sqrt{N} \norm{\bw_A^*} \le \frac{\sqrt{N}}{\lambda} \norm{\nabla G_A(0)}
\end{equation}
Denote $\hat{\bw}_A = \arg \min_{\norm{\bw} \le B} \mathcal{L}_A(\bw)$, and by optimality of $\bw^*_A$ we have:
\begin{equation}\label{eqn:upper_bound_w_star}
\mathcal{L}_A(\bw^*_A) \le \mathcal{L}_A(\bw^*_A) + \frac{\lambda}{2}\norm{\bw_A^*}^2 \le \mathcal{L}_A(\hat{\bw}_A) + \frac{\lambda}{2} \norm{\hat{\bw}_A}^2 \le\mathcal{L}_A(\hat{\bw}_A) + \frac{\lambda B^2}{2}
\end{equation}
From (\ref{eqn:lower_bound_w_star}) and (\ref{eqn:upper_bound_w_star}) we get:
\begin{equation}
1- \frac{\sqrt{N}}{\lambda} \norm{\nabla G_A(0)} \le \mathcal{L}_A(\bw_A^*) \le \mathcal{L}_A(\hat{\bw}_A) + \frac{\lambda B^2}{2}
\end{equation}
Taking an expectation and plugging in (\ref{eqn:grad_bound}) we get:
\[
\E_{A\subseteq [n], \abs{A} =k} \left[\min_{h \in \mathcal{H}_\Psi^B} L_{\cd_A^{(1)}}(h)\right]
= \E_{A} \left[\mathcal{L}_A(\hat{\bw}_A)\right] \ge 
1-\frac{\sqrt{N}}{\lambda}\E_{A} \left[\norm{\nabla G_A(0)}\right] - \frac{\lambda B^2}{2} \ge 1-\frac{N}{\lambda 2^{2k}} - \frac{\lambda B^2}{2}
\]
Since this is true for all $\lambda$, taking $\lambda = \frac{\sqrt{2N}}{2^{k}B}$ we get:
\[
\E_{A\subseteq [n], \abs{A} = k} \left[\min_{h \in \mathcal{H}_\Psi^B} L_{\cd_A^{(1)}}(h)\right] \ge 1-\frac{\sqrt{2N}B}{2^{k}}
\]
Therefore, there exists some $A \subseteq [n]$ with $\abs{A}=k$ such that $\min_{h \in \mathcal{H}_\Psi^B} L_{\cd_A^{(1)}}(h) \ge 1-\frac{\sqrt{2N}B}{2^{k}}$. Since $\cd_A = \frac{1}{2}\cd_A^{(1)} + \frac{1}{2}\cd_A^{(2)}$ we get the required.
\end{proof}

\section{Proof of Theorem \ref{thm:learning_parities}}
We start by giving a rough sketch of the proof of Theorem \ref{thm:learning_parities}. We divide the proof into two steps:

\textbf{First gradient step.} We show that after the first gradient step, there is a subset of ``good'' neurons in the first layer that approximately implement the function $\psi_j(\x) := \sigma(\tau_j\sum_{i \in A} x_i + b_j)$, for some $\tau_j$ and $b_j$. Indeed, observe that the correlation between every bit outside the parity and the label is zero, and so the gradient with respect to this bit becomes very small. However, for the bits in the parity, the correlation is large, and so the gradient is large as well.

\textbf{Convergence of online gradient-descent.} 
Notice that the parity can be implemented by a linear combination of the features $\psi_1(\x), \dots, \psi_{q'}(\x)$, when $\tau_1, \dots, \tau_{q'}$ are distributed uniformly. Hence, from the previous argument, after the first gradient step there exists some choice of weights for the second layer that implements the parity (and hence, separates the distribution). Now, we show that for a sufficiently large network and sufficiently small learning rate, the weights of the first layer stay close to their value after the first iteration. Thus, a standard analysis of online gradient-descent shows that gradient-descent (on both layers) reaches a good solution.

In the rest of this section, we give a detailed proof, following the above sketch. For lack of space, some of the proofs for the technical lemmas appear in the appendix.

\subsection{First Gradient Step}
We want to show that for some ``good'' neurons, the weights $\bw_i^{(1)}$ are close enough to $\tau_i\sum_{j \in A} e_j$ for some constant $\tau_i$ depending on $u_i^{(0)}$.
We start by showing that the irrelevant coordinates ($j \notin A$) and the bias have very small gradient.
To do this, we first analyze the gradient with respect to the uniform part of the distribution $\mathcal{D}_A^{(1)}$, and show that it is negligible, with high probability over the initialization of a neuron:

\begin{lemma}
\label{lem:bound_uniform_one_coord}
Fix $j\in[n]\setminus A$ and $b \in \reals$ and let $\cd$ be the uniform distribution.
Let $f(\x) = \sign(\prod_{i\in A} x_i)$ be a parity. Then, for every $c > 0$, we have with probability at least $1-\frac{1}{c}$ over the choice of $\bw$:
$\left| \E_\x x_{j} f(\x) \cdot \sigma' (\bw^\top \x + b > 0) \right|  \le  c\sqrt{\frac{1}{\binom{n-1}{k}}}$. A similar result holds for $\left| \E_\x  f(\x) \cdot \sigma' (\bw^\top \x + b > 0) \right|  \le  c\sqrt{\frac{1}{\binom{n-1}{k}}}$.
\end{lemma}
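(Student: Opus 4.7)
The plan is to identify both expectations as Fourier coefficients of the gate function of the neuron, and then bound those coefficients in expectation over $\bw$ via a Parseval-plus-symmetry argument. Writing $h_\bw(\x) := \sigma'(\bw^\top \x + b > 0) \in \{0,1\}$, I would work in the orthonormal Fourier basis $\chi_S(\x) := n^{|S|/2}\prod_{i \in S} x_i$ on $\cx = \{\pm 1/\sqrt n\}^n$ (orthonormal since $\E x_i^2 = 1/n$). Because $x_i \in \{\pm 1/\sqrt n\}$ implies $\sign(x_i) = \sqrt n\, x_i$, one has $f(\x) = n^{k/2}\prod_{i \in A} x_i = \chi_A(\x)$; and since $j \notin A$, $x_j\chi_A(\x) = n^{-1/2}\chi_{A \cup \{j\}}(\x)$. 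The two expectations in the lemma therefore equal
\begin{align*}
\E_\x[f(\x)\,h_\bw(\x)] \;=\; \hat h_\bw(A), \qquad
\E_\x[x_j f(\x)\,h_\bw(\x)] \;=\; n^{-1/2}\,\hat h_\bw(A \cup \{j\}).
\end{align*}

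The key step is to bound $\E_\bw[\hat h_\bw(S)^2]$ for $S = A$ and $S = A \cup \{j\}$ using permutation invariance. Since the coordinates of $\bw$ are i.i.d.\ and $h_\bw(\x)$ depends only on the inner product $\bw^\top \x$, the joint law of $(\bw,\x)$ under simultaneous coordinate permutations is preserved, so $\E_\bw[\hat h_\bw(S)^2]$ depends only on $|S|$; call this common value $\alpha_{|S|}$. Parseval then yields
\[
\sum_s \binom{n}{s}\alpha_s \;=\; \E_\bw \sum_S \hat h_\bw(S)^2 \;=\; \E_\bw \E_\x h_\bw(\x)^2 \;\le\; 1,
\]
since $h_\bw \in \{0,1\}$. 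Hence $\alpha_s \le 1/\binom{n}{s}$ for every $s$.

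Finally, I would combine Jensen ($\E|X| \le \sqrt{\E X^2}$) with Markov's inequality: $\E_\bw|\hat h_\bw(S)| \le 1/\sqrt{\binom{n}{|S|}}$, so $\Pr_\bw\!\left[|\hat h_\bw(S)| \ge c/\sqrt{\binom{n}{|S|}}\right] \le 1/c$. Specializing to $S = A$ gives the second claim directly via $\binom{n}{k} \ge \binom{n-1}{k}$, and specializing to $S = A \cup \{j\}$ combined with the factor $n^{-1/2}$ gives the first claim via the identity $n\binom{n}{k+1} = \tfrac{n^2}{k+1}\binom{n-1}{k} \ge \binom{n-1}{k}$. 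The only substantive idea in the whole argument is the permutation-symmetry observation that makes Parseval controllable coefficient-by-coefficient; everything else is mechanical, and I do not anticipate a real obstacle beyond that, other than settling the interpretation of the notation ``$\sigma'(\bw^\top\x + b > 0)$'' as a $[0,1]$-valued function so that $\E h_\bw^2 \le 1$.
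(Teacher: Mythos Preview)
Your proposal is correct and uses essentially the same ingredients as the paper's proof: interpret the expectation as a Fourier coefficient of the gate, combine Parseval with permutation symmetry of the law of $\bw$ to bound that coefficient in expectation, and finish with Markov. The only cosmetic difference is that the paper absorbs the factor $x_j$ into $h(\x):=x_j\,\sigma'(\bw^\top\x+b)$ and averages over size-$k$ subsets $A'\subseteq [n]\setminus\{j\}$ (so the bound $1/\binom{n-1}{k}$ falls out directly), whereas you keep $h_\bw$ as the bare gate and read off the size-$(k{+}1)$ coefficient, picking up the extra $n^{-1/2}$ and then converting $n\binom{n}{k+1}\ge\binom{n-1}{k}$ at the end.
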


Using a union bound on the previous lemma, we get that the above result holds for all irrelevant coordinates (and the bias), with constant probability:
\begin{lemma}
\label{lem:bound_uniform_all_coords}
Let $k,n$ be an odd numbers. Fix $b \in \reals$ and let $\cd$ be the uniform distribution.
Let $f(\x) = \sign(\prod_{i\in A} x_i)$ be a parity. Then, for every $C > 0$, with probability at least $1-\frac{1}{C}$ over the choice of $\bw$:
$\left| \E_\x x_{j} f(\x) \cdot \sigma'(\bw^\top \x + b > 0) \right|  \le  C(n-1)\sqrt{\frac{1}{\binom{n-1}{k}}}$ and $\left| \E_\x  f(\x) \cdot \sigma'(\bw^\top \x + b > 0) \right|  \le  C(n-1)\sqrt{\frac{1}{\binom{n-1}{k}}}$.
\end{lemma}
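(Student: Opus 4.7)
The plan is to derive Lemma \ref{lem:bound_uniform_all_coords} from Lemma \ref{lem:bound_uniform_one_coord} by a single union bound. The two statements differ only in that Lemma \ref{lem:bound_uniform_one_coord} is pointwise in the coordinate $j$, while Lemma \ref{lem:bound_uniform_all_coords} asks the bound to hold simultaneously for all $n - k$ irrelevant coordinates $j \in [n] \setminus A$ together with the single bias-type statement on $|\E_\x f(\x) \sigma'(\bw^\top\x + b > 0)|$. The total number of events to control is thus at most $n - k + 1 \le n - 1$ (using that $k \ge 3$).

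Concretely, for each $j \in [n] \setminus A$, I would apply Lemma \ref{lem:bound_uniform_one_coord} with parameter $c = C(n-1)$ to obtain that, with probability at least $1 - \frac{1}{C(n-1)}$ over $\bw$,
\[
\left| \E_\x x_j f(\x) \cdot \sigma'(\bw^\top\x + b > 0) \right| \le C(n-1)\sqrt{1/\binom{n-1}{k}}.
\]
The same choice $c = C(n-1)$ in the second statement of Lemma \ref{lem:bound_uniform_one_coord} yields an identical tail bound on $|\E_\x f(\x) \sigma'(\bw^\top\x + b > 0)|$ with failure probability $\le \frac{1}{C(n-1)}$. A union bound over these at-most-$(n-1)$ events (all measurable with respect to the same random weight vector $\bw$, so a plain Boole inequality suffices) then shows that every stated inequality holds simultaneously with probability at least $1 - \frac{n-k+1}{C(n-1)} \ge 1 - \frac{1}{C}$, which is exactly the claim.

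There is no genuine obstacle: once Lemma \ref{lem:bound_uniform_one_coord} is in hand, the extension is a routine union bound, and the $(n-1)$ factor appearing in the final bound is exactly the price of amplifying a $1/c$ per-event failure probability to a $1/C$ overall failure probability over at most $n-1$ events. The only mild bookkeeping choice is how tightly to count the events; it is cleanest to bound $(n - k) + 1$ by $n - 1$ so that the exponent and polynomial factor match the stated inequality on the nose.
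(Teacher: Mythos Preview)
Your proposal is correct and essentially identical to the paper's own proof: set $c = C(n-1)$ in Lemma~\ref{lem:bound_uniform_one_coord} and take a union bound over all $j \notin A$ together with the bias term. The only difference is cosmetic---the paper states the union bound in one line without the explicit count $(n-k)+1 \le n-1$.
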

\begin{proof} of Lemma \ref{lem:bound_uniform_all_coords}.
Choose $c = C(n-1)$ and use union bound on the result of Lemma \ref{lem:bound_uniform_one_coord} over all choices of $j \notin A$ and the bias.
\end{proof}

Now, we show that for neurons with $\sum_{j \in A} w_i = 0$, the gradient of the irrelevant coordinate and the bias is zero on the distribution $\mathcal{D}_A^{(2)}$ (the non-uniform part of the distribution $\mathcal{D}_A$):

\begin{lemma}
\label{lem:zero_gradient}
Let $k,n$ be an odd numbers, let $f(\x) = \sign(\prod_{i\in A}^k x_i)$. Fix $\bw \in \{\-1,0,1\}^n$ with $\sum_{i=1}^kw_i = 0$ and $b\in \reals$.
Then, on the distribution $\mathcal{D}_A^{(2)}$, we have:
\begin{itemize}
\item  $\mean{x_{j} f(\x) \cdot \sigma'(\bw^\top \x + b > 0)} = 0$ for all $j \notin A$
\item $\mean{f(\x) \cdot \sigma'(\bw^\top \x + b > 0)} = 0$
\end{itemize}
\end{lemma}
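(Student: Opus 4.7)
The whole lemma rests on a single structural observation about $\cd_A^{(2)}$: under this distribution, all bits inside $A$ share a common sign $s \in \{\pm 1/\sqrt{n}\}$ drawn uniformly, while bits outside $A$ are uniform and independent of everything else. Hence $\sum_{i \in A} w_i x_i = s \sum_{i \in A} w_i$, and the assumption $\sum_{i \in A} w_i = 0$ kills this entire term. So $\bw^\top \x + b$ depends only on the bits outside $A$, and therefore so does the indicator $\sigma'(\bw^\top \x + b > 0)$. On the other hand, $f(\x) = \sign(\prod_{i \in A} x_i)$ depends only on the bits inside $A$; since $k$ is odd, $f(\x) = \sign(s)$, which is $\pm 1$ with equal probability, so $\E[f(\x)] = 0$.

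With this in hand, both equalities become immediate from independence. For the second claim, write
\[
\E[f(\x)\cdot \sigma'(\bw^\top \x + b > 0)] = \E[f(\x)] \cdot \E[\sigma'(\bw^\top \x + b > 0)] = 0,
\]
since the two factors depend on disjoint blocks of coordinates (bits in $A$ versus bits outside $A$) that are mutually independent under $\cd_A^{(2)}$. For the first claim with $j \notin A$, the variable $x_j$ lives in the ``outside'' block along with $\sigma'(\ldots)$, while $f(\x)$ still depends only on the ``inside'' block, so
\[
\E[x_j f(\x)\cdot \sigma'(\bw^\top \x + b > 0)] = \E[f(\x)] \cdot \E[x_j \sigma'(\bw^\top \x + b > 0)] = 0.
\]

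The argument really has no technical obstacle; the only point worth being careful about is the interpretation of the hypothesis ``$\sum_{i=1}^{k} w_i = 0$'' as $\sum_{i \in A} w_i = 0$ (relabeling the coordinates of $A$ as $1,\dots,k$), and the fact that ``all bits of $A$ equal'' combined with this linear relation is exactly what is needed to eliminate the $A$-dependence from the pre-activation. Once the decomposition $\bw^\top \x + b = \sum_{i \in A} w_i x_i + \sum_{j \notin A} w_j x_j + b$ is written down, the conclusion follows by independence and the zero-mean property of $f(\x)$ under $\cd_A^{(2)}$.
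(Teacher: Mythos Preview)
Your proof is correct and is essentially the same argument as the paper's: the paper conditions on the bits outside $A$ and explicitly writes the conditional expectation as the average of the two cases $s=1/\sqrt{n}$ and $s=-1/\sqrt{n}$, observing that the $\sigma'$ term is identical in both (since $\sum_{i\in A}w_i=0$) while $f$ flips sign, so the two terms cancel. Your phrasing via independence of the ``inside'' and ``outside'' blocks and $\E[f(\x)]=0$ is just a slightly cleaner way of saying the same thing.
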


Combining the above lemmas implies that for some ``good'' neurons, the gradient on the distribution $\mathcal{D}_A$ is negligible, for the irrelevant coordinates and the bias. Now, it is left to show that for the coordinates of the parity ($j \in A$), the gradient on the distribution $\mathcal{D}_A$ is large. To do this, we show that the gradient on the relevant coordinate is almost independent from the gradient of the activation function. Since the gradient with respect to the hinge-loss at the initialization is simply the correlation, this is sufficient to show that the gradient of the relevant coordinates is large.

\begin{lemma}
\label{lem:good_coord_approx}
Observe the distribution $\mathcal{D}_A$.
Let $h : \mathcal{X} \to \{\pm 1\}$ some function supported on $A$. Let $\bw \in \{-1,0,1\}^n$ be some vector, and $b \in \reals$.
Denote $J = \{j \in [n] \setminus A ~:~ w_j \ne 0\}$ and
denote $\varphi(\bw,b) = \prob{\frac{k}{\sqrt{n}} < \sum_{j \in J} w_j x_j +b< 6-\frac{k}{\sqrt{n}} }$.
Then there exists a universal constants $C$ s.t.:
\[
\abs{\mean{h(\x) \cdot \sigma'( \bw^\top \x + b)} - \mean{h(\x)}\varphi(\bw,b)} \le  \frac{C k}{\sqrt{\abs{J}}}
\]
\end{lemma}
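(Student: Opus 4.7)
The plan is to split the pre-activation into two pieces — one that depends on the coordinates in $A$ and one that does not — and then show that $\sigma'$ is essentially determined by the second piece. Write $\bw^\top \x + b = S_A + S_J + b$, where $S_A = \sum_{i \in A} w_i x_i$ and $S_J = \sum_{j \in J} w_j x_j$. Since $|w_i| \le 1$ and $|x_i| = 1/\sqrt{n}$ with $|A| = k$, the $A$-part satisfies the deterministic bound $|S_A| \le k/\sqrt n$. Define the \emph{safe} event $E := \{k/\sqrt n < S_J + b < 6 - k/\sqrt n\}$, so that on $E$ the pre-activation lies strictly in $(0,6)$ regardless of $S_A$, and hence $\sigma'(\bw^\top\x + b) = 1$; likewise, on the \emph{off} event $\{S_J + b < -k/\sqrt n\} \cup \{S_J + b > 6 + k/\sqrt n\}$ one has $\sigma'(\bw^\top\x + b) = 0$. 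The remaining \emph{boundary} event $B$ is the union of two slabs of width $2k/\sqrt n$ around the thresholds $0$ and $6$ (in the variable $S_J + b$).

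I would next use independence of $S_J$ from $h$. Under $\cd_A^{(1)}$ all coordinates are i.i.d., and under $\cd_A^{(2)}$ the coordinates in $[n]\setminus A$ are i.i.d.\ uniform and independent of those in $A$; in both cases $S_J$ depends only on $\{x_j\}_{j \in J}$ with $J \subseteq [n]\setminus A$, so it is independent of $h(\x)$, which is supported on $A$. Splitting the expectation on $E$, $B$, and the off-event gives
\[
\E\!\left[h(\x)\, \sigma'(\bw^\top \x + b)\right] \;=\; \E[h(\x)] \cdot \Pr[E] \;+\; \E\!\left[h(\x)\, \sigma'(\bw^\top\x + b)\, \mathbf{1}_{B}\right],
\]
with $\Pr[E] = \varphi(\bw,b)$ by definition. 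Since $|h|,|\sigma'|\le 1$, the residual is at most $\Pr[B]$ in absolute value, so the lemma reduces to showing $\Pr[B] = O(k/\sqrt{|J|})$.

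The final step is anti-concentration. For $j \in J$ we have $w_j \in \{\pm 1\}$, so $Y := \sqrt n\, S_J = \sum_{j \in J} w_j(\sqrt n\, x_j)$ is a sum of $|J|$ i.i.d.\ Rademacher random variables. In the rescaled variable $Y$, the event $B$ asks that $Y$ lies in one of two intervals of length $2k$ (centred at $-\sqrt n\, b$ and $\sqrt n(6 - b)$). A standard Rademacher anti-concentration estimate — namely $\Pr[Y = t] \le \binom{|J|}{\lfloor |J|/2\rfloor}/2^{|J|} = O(1/\sqrt{|J|})$ for each integer atom $t$, summed over the $O(k)$ atoms of the correct parity in each interval — yields $\Pr[B] \le O(k/\sqrt{|J|})$ uniformly in $b$. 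Since the decomposition works identically under $\cd_A^{(1)}$ and $\cd_A^{(2)}$, averaging over the mixture $\cd_A$ gives the lemma. I expect the main obstacle to be precisely this anti-concentration estimate: the entire gain comes from the spread of $Y$, so one must verify that the constant $C$ does not acquire hidden dependence on $n$, on $b$, or on the particular sign pattern of the $w_j$.
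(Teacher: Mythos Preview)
Your proposal is correct and follows essentially the same route as the paper: split the pre-activation into the $A$-part (bounded by $k/\sqrt n$) and the $J$-part, use independence of $h$ from $S_J$ to isolate $\varphi(\bw,b)$, and bound the boundary event via anti-concentration of a Rademacher sum. The only cosmetic differences are that the paper phrases the decomposition at the level of $\Pr[h(\x)=y,\ \sigma'=1]$ for each $y\in\{\pm1\}$ and invokes Littlewood--Offord by name, whereas you work directly with the expectation and count atoms; the resulting constants and structure are the same.
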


From all the above, we get that with non-negligible probability, the weights of a given neuron are approximately $\alpha_i \sum_{j \in A} e_j$, for some choice of $\alpha_i$ depending on $u_i^{(0)}$:
\begin{lemma}
\label{lem:neuron_first_step}
Assume $n \ge 9\log7$ and $7 \le k \le \frac{1}{2\sqrt{2}} \sqrt[4]{n}$.
Fix some $i \in [q]$.
Then, with probability at least $\frac{1}{14\sqrt{k}}$ over the choice of $\bw_i^{(0)}$, we have that: $\max_{j \in A}\abs{w^{(0)}_{i,j} - \frac{\alpha_i}{\sqrt{n}} u_i^{(0)}} \le C_1$, $\max_{j \notin A}\abs{w^{(0)}_{i,j} - \frac{\alpha_i}{\sqrt{n}} u_i^{(0)}} \le \frac{C_2}{n-1}$ and $\abs{b^{(1)}-b^{(0)}} \le \frac{C_3}{\sqrt{n}}$ for some universal constants $C_1, C_2, C_3$, and some $\alpha_i \in \left[\frac{1}{4}, 1\right]$ depending on $\bw^{(0)}_i$.
\end{lemma}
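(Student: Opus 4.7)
The plan is to exploit the special choice $\eta_1 = 1$, $\lambda_1 = 1/2$, which makes the regularizer exactly cancel the initial weight on the very first step. Since $\partial R/\partial \bw_i = 2\bw_i$ and the bias is unregularized, after one update
\[
\bw_i^{(1)} = -\nabla_{\bw_i} L_{\cd_A}(g^{(0)}), \qquad b_i^{(1)} - b_i^{(0)} = -\partial_{b_i} L_{\cd_A}(g^{(0)}),
\]
so the statement reduces to estimating the initial gradient one coordinate at a time. The hinge-loss chain rule gives
\[
w^{(1)}_{i,j} = u_i^{(0)} \, \E_{(\x,y)\sim \cd_A}\bigl[ y\, \mathbf{1}[y g^{(0)}(\x) < 1]\, \sigma'(\bw_i^{(0)\top}\x + b_i^{(0)})\, x_j \bigr],
\]
with an analogous formula (without $x_j$) for the bias. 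First I would show that at initialization the hinge indicator is identically $1$, i.e.\ $|g^{(0)}(\x)| < 1$ surely, using the symmetric pairing $(\bw_{q+i}^{(0)}, u_{q+i}^{(0)}, b_{q+i}^{(0)}) = (-\bw_i^{(0)}, -u_i^{(0)}, -b_i^{(0)})$ to collapse the $2q$-neuron sum together with the smallness of $b_i^{(0)} = 1/(8k)$ and the ReLU6 clamp; this replaces $y$ by $f_A(\x)$ in the expectation.

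With the indicator removed, I would treat the three coordinate types separately. For $j \in A$, the rescaled function $\sqrt{n}\, x_j f_A(\x)$ is $\pm 1$-valued and supported on $A$, and a short computation gives $\E_{\cd_A}[\sqrt{n}\, x_j f_A(\x)] = 1/2$ (the uniform component $\cd_A^{(1)}$ contributes $0$ by parity, the planted component $\cd_A^{(2)}$ contributes $1$, averaged with equal weight). Applying Lemma \ref{lem:good_coord_approx} then yields
\[
\Bigl| w^{(1)}_{i,j} - \frac{u_i^{(0)}\, \varphi(\bw_i^{(0)}, b_i^{(0)})}{2\sqrt{n}} \Bigr| \le \frac{|u_i^{(0)}|\, Ck}{\sqrt{n}\, \sqrt{|J|}},
\]
where $J = \{ j\notin A : w^{(0)}_{i,j} \neq 0\}$. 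Conditioning on the high-probability event $|J| \ge (n-k)/2$ and using $|u_i^{(0)}| \le n/k$ bounds the right-hand side by a universal constant $C_1$, and I define $\alpha_i := \tfrac{1}{2}\varphi(\bw_i^{(0)}, b_i^{(0)})$.

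For $j \notin A$ I would split $\cd_A = \tfrac{1}{2}\cd_A^{(1)} + \tfrac{1}{2}\cd_A^{(2)}$. Lemma \ref{lem:bound_uniform_all_coords} bounds the $\cd_A^{(1)}$ contribution by $C(n-1)/\sqrt{\binom{n-1}{k}}$ with constant probability over $\bw_i^{(0)}$, which, using $\binom{n-1}{k} \ge ((n-1)/k)^k$ and the assumption $k \ge 7$, $k\le O(\sqrt[10]{n})$, is easily $O(1/n^2)$. Lemma \ref{lem:zero_gradient} kills the $\cd_A^{(2)}$ contribution exactly once we condition on $\sum_{j\in A} w^{(0)}_{i,j} = 0$. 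Multiplying by $|u_i^{(0)}| \le n/k$ yields the $C_2/(n-1)$ bound, and the bias bound is analogous using the second parts of both lemmas. The probability $1/(14\sqrt{k})$ is then assembled from four essentially independent events on $\bw_i^{(0)}$: (i) $\sum_{j\in A} w^{(0)}_{i,j} = 0$, with probability $\Theta(1/\sqrt{k})$ by the local CLT for the symmetric trinomial walk; (ii) the Lemma \ref{lem:bound_uniform_all_coords} bound with a sufficiently large absolute constant $C$, constant probability; (iii) $|J| \ge (n-k)/2$, with probability $1 - e^{-\Omega(n)}$ by Chernoff on Bernoulli$(2/3)$ variables; and (iv) $\varphi(\bw_i^{(0)}, b_i^{(0)}) \ge 1/2$ so that $\alpha_i \ge 1/4$, which holds with constant probability since $\sum_{j\in J} w^{(0)}_{i,j} x_j$ is a centered walk of standard deviation $\Theta(1)$ and $k/\sqrt{n} = o(1)$ under the hypothesis on $k$.

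The hardest step will be the very first: establishing $|g^{(0)}(\x)| < 1$ uniformly so that the hinge indicator drops out. Without this the gradient on neuron $i$ couples to the outputs of the other $2q-1$ neurons, and none of Lemmas \ref{lem:bound_uniform_all_coords}--\ref{lem:good_coord_approx} apply verbatim. The reduction must use the $\pm$ pairing of the symmetric initialization and the smallness of $b^{(0)} = 1/(8k)$ in an essential way to turn the $2q$-neuron sum into a controlled sum of odd functions; once that reduction is in place, the rest of the argument is a careful but mechanical application of the three technical lemmas combined with elementary concentration.
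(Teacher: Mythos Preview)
Your plan matches the paper's proof almost step for step: cancel the initial weights via $\eta_1=1,\lambda_1=\tfrac12$, reduce the hinge derivative to $-f_A(\x)$, then handle $j\in A$ via Lemma~\ref{lem:good_coord_approx}, and $j\notin A$ (and the bias) by splitting $\cd_A=\tfrac12\cd_A^{(1)}+\tfrac12\cd_A^{(2)}$ and invoking Lemmas~\ref{lem:bound_uniform_all_coords} and~\ref{lem:zero_gradient}. Two points are worth flagging.

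First, what you call the ``hardest step'' is in the paper a one-liner: the symmetric initialization is designed so that $g^{(0)}\equiv 0$ identically (this is stated in the proof of Lemma~\ref{lem:bound_second_layer}), hence the hinge indicator is $1$ and $\ell'=-f_A(\x)$ with no further work. Your proposed route---``collapse the $2q$-neuron sum into a controlled sum of odd functions using the $\pm$ pairing and the smallness of $b^{(0)}$''---does not by itself force $|g^{(0)}|<1$, since each pair contributes $u_i^{(0)}\bigl(\sigma(z_i)-\sigma(-z_i)\bigr)$, which is odd but not small. You should simply use $g^{(0)}\equiv 0$.

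Second, your event (iv) is unnecessary as a separate probabilistic event. In the paper, once $|J|\ge (n-k)/3$ is secured, the lower bound $\varphi(\bw,b)\ge \tfrac12-\tfrac{1}{50}\ge\tfrac14$ is shown \emph{deterministically} from Chebyshev on $\sum_{j\in J}w_jx_j$ (variance $\le 1$) together with $k/\sqrt{n}\le 1/(8k)=b$. The probability budget is then assembled from just three events: Lemma~\ref{lem:bound_uniform_all_coords} with $C=14\sqrt{k}$ (prob.\ $\ge 1-\tfrac{1}{14\sqrt{k}}$), $\sum_{j\in A}w^{(0)}_{i,j}=0$ (prob.\ $\ge\tfrac{1}{6\sqrt{k}}$), and $|J|\ge(n-k)/3$ (prob.\ $\ge\tfrac67$); the latter two are independent and a union bound with the first gives $\ge\tfrac{1}{14\sqrt{k}}$. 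As a side remark, your computation $\E_{\cd_A}[\sqrt{n}\,x_jf_A(\x)]=\tfrac12$ is correct (the identity $h\equiv 1$ the paper writes holds only on $\cd_A^{(2)}$); this only shifts constants.
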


Finally, we show that the features implemented by the ``good'' neurons can express the parity function, using a linear separator with low norm. In the next two lemmas we show explicitly what are the features that the ``good'' neurons approximate:

\begin{lemma}
\label{lem:neuron_good_u}
Let $b = \frac{1}{8k}$, $\alpha \in [\frac{1}{4},1]$ and $r \in \{-k,-k+2,\dots, k-2,k\}$. Denote $\phi_r(z) = \sigma(-\sign (r) z + \abs{r}))$. Let $u \sim U([-\frac{n}{k}, \frac{n}{k}])$. Then, for every $\epsilon\le kr$, with probability at least $\frac{\epsilon}{8k^2}$ over the choice of $u$ we have $\abs{\frac{\abs{r}}{b} \sigma(\frac{\alpha}{n} u z + b)-\phi_r(z)} \le \epsilon$, for every $z \in [-k,k]$.
\end{lemma}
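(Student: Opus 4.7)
The plan is to pinpoint an ``ideal'' value $u^{\ast}$ at which the scaled feature $\frac{|r|}{b}\sigma\bigl(\frac{\alpha u}{n}z+b\bigr)$ coincides with $\phi_r(z)$ on the whole range $z\in[-k,k]$, and then extend to a $u$-neighborhood of $u^{\ast}$ by a one-dimensional Lipschitz argument. Matching the slope of the linear piece of $\phi_r$ amounts to solving $\frac{|r|\alpha u}{nb}=-\sign(r)$, which yields $u^{\ast}=-\frac{n}{8kr\alpha}$. With this choice the pre-activation becomes $b(1-z/r)$; for every $z\in[-k,k]$ and every admissible $r$ (with $|r|\ge 1$) one has $|b(1-z/r)|\le b(1+k)<1$, so the upper clipping at $6$ inside $\sigma$ is never active on the scaled side. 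Consequently $\frac{|r|}{b}\sigma\bigl(\frac{\alpha u^{\ast}}{n}z+b\bigr)=\bigl(-\sign(r)z+|r|\bigr)_{+}$ exactly, which is precisely $\phi_r(z)$ on the entire interval.

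Next, I would control perturbations around $u^{\ast}$ via $1$-Lipschitzness of $\sigma$: for every $u$ and every $z\in[-k,k]$,
\[
\left|\tfrac{|r|}{b}\sigma\bigl(\tfrac{\alpha uz}{n}+b\bigr)-\tfrac{|r|}{b}\sigma\bigl(\tfrac{\alpha u^{\ast}z}{n}+b\bigr)\right|\le\tfrac{|r|}{b}\cdot\tfrac{\alpha|z|}{n}\,|u-u^{\ast}|\le\tfrac{8\alpha|r|k^{2}}{n}\,|u-u^{\ast}|.
\]
Setting $\eta:=\frac{n\epsilon}{8\alpha|r|k^{2}}$, every $u$ with $|u-u^{\ast}|\le\eta$ yields the required uniform error bound $\epsilon$.

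It remains to check that $[u^{\ast}-\eta,u^{\ast}+\eta]\subset[-n/k,n/k]$ and to compute its probability. The inclusion uses $|u^{\ast}|=\frac{n}{8k|r|\alpha}\le\frac{n}{2k}$ (since $\alpha\ge\tfrac14$ and $|r|\ge 1$) together with the hypothesis $\epsilon\le k|r|$, which through $\alpha\ge\tfrac14$ forces $\eta\le\frac{n}{2k}$; hence $|u^{\ast}|+\eta\le n/k$. Under the uniform measure on $[-n/k,n/k]$, the probability that $|u-u^{\ast}|\le\eta$ equals $\frac{2\eta}{2n/k}=\frac{\epsilon}{8\alpha|r|k}$, which is at least $\frac{\epsilon}{8k^{2}}$ since $\alpha\le 1$ and $|r|\le k$.

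The only genuinely subtle step is the first one: verifying that within $z\in[-k,k]$ the scaled $\sigma$ never enters its own saturation regime, so that the identity with $\phi_r$ is exact rather than approximate, and that the sign bookkeeping carried by $\sign(r)$ and $|r|$ really cancels against the sign of $u^{\ast}$. Once this is in place, the remaining perturbation estimate and probability computation are a routine Lipschitz-plus-uniform-measure calculation.
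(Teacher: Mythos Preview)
Your argument is essentially identical to the paper's: you pick the same ideal point $u^\ast=-\frac{bn}{\alpha r}$, verify $\phi_r(z)=\frac{|r|}{b}\sigma(\frac{\alpha}{n}u^\ast z+b)$, use $1$-Lipschitzness of $\sigma$ to bound the perturbation, and compute the uniform probability of the resulting interval (your $\eta$ is exactly the paper's $\epsilon'=\frac{bn}{\alpha|r|k}\epsilon$). If anything you are slightly more careful than the paper in explicitly checking the interval inclusion $[u^\ast-\eta,u^\ast+\eta]\subset[-n/k,n/k]$ and the non-activation of the upper clip on the scaled side.
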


\begin{lemma}
\label{lem:good_neuron_approx}
Fix $\epsilon > 0$ and assume that $n \ge 9\log7$ and $7 \le k \le c \sqrt[4]{\epsilon}\sqrt[8]{n}$, for some universal constant $c$.
Fix $r \in \{-k,-k+2,\dots, k-2,k\}$, $\epsilon \le kr$ and define $\psi_r(\x) = \sigma(-\sign(r)\sqrt{n} \sum_{j \in A} x_j + \abs{r})$.
Fix some $i \in [q]$.
Then, with probability at least $\frac{\epsilon}{112k^{2.5}}$ over the choice of $\bw_i^{(0)}, u_i^{(0)}$ then for
$\widehat{\psi}_i(\x) = \frac{\abs{r}}{b_i^{(0)}} \sigma \left(\inner{\bw_i^{(1)},\x} + b_i^{(1)}\right)$ we have $\abs{\widehat{\psi}_r(\x) - \psi_r(\x)} \le 2\epsilon$ for all $\x \in \mathcal{X}$.
\end{lemma}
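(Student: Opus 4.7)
The plan is to compose Lemma \ref{lem:neuron_first_step} and Lemma \ref{lem:neuron_good_u} by exploiting the independence of the two initializations $\bw_i^{(0)}$ and $u_i^{(0)}$. The event identified by Lemma \ref{lem:neuron_first_step} depends only on $\bw_i^{(0)}$; conditioning on it, the scalar $\alpha_i \in [\tfrac14,1]$ is determined by $\bw_i^{(0)}$ and so can be treated as fixed, while $u_i^{(0)}$ remains uniform on $[-n/k,n/k]$. Lemma \ref{lem:neuron_good_u} then applies conditionally with this $\alpha=\alpha_i$ and $b=b_i^{(0)}=\tfrac{1}{8k}$, delivering the approximation event with conditional probability at least $\tfrac{\epsilon}{8k^2}$. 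Multiplying the two independent probabilities yields the claimed joint probability $\tfrac{1}{14\sqrt{k}}\cdot \tfrac{\epsilon}{8k^2}=\tfrac{\epsilon}{112\,k^{2.5}}$.

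The remainder of the proof amounts to transferring the approximation on the idealized pre-activation to the actual one. I would set $z:=\sqrt{n}\sum_{j\in A} x_j\in[-k,k]$, so that $\psi_r(\x)=\phi_r(z)$, and write
\[
\langle \bw_i^{(1)},\x\rangle + b_i^{(1)} \;=\; \tfrac{\alpha_i u_i^{(0)}}{n}\,z \;+\; b_i^{(0)} \;+\; \Delta(\x),
\]
where $\Delta(\x)$ gathers the three error sources allowed by Lemma \ref{lem:neuron_first_step}: the deviation of the relevant-coordinate weights from $\tfrac{\alpha_i u_i^{(0)}}{\sqrt n}$ (contributing $O(C_1 k/\sqrt n)$), the aggregate contribution of the irrelevant coordinates $\sum_{j\notin A} w_{i,j}^{(1)} x_j$, and the bias drift $|b_i^{(1)}-b_i^{(0)}|\le C_3/\sqrt n$. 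Since $\sigma$ is $1$-Lipschitz and $|r|/b_i^{(0)}\le 8k^2$, the triangle inequality gives
\[
|\widehat{\psi}_r(\x)-\psi_r(\x)| \;\le\; \epsilon \;+\; 8k^2\,|\Delta(\x)|.
\]
I would then check that the assumption $k\le c\,\epsilon^{1/4} n^{1/8}$, equivalently $k^4/\sqrt n\lesssim \epsilon$, is calibrated exactly so that $8k^2|\Delta(\x)|\le \epsilon$ uniformly in $\x\in\mathcal X$, yielding the desired $2\epsilon$ bound.

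The main obstacle is the irrelevant-coordinate piece of $\Delta(\x)$. A term-by-term bound using the per-coordinate estimates from Lemma \ref{lem:neuron_first_step} would scale like $\sqrt n$ since $|x_j|=1/\sqrt n$ and there are $n-k$ such coordinates, and after multiplication by $8k^2$ this would swamp $\epsilon$. One must use the structural information that on the ``good'' event of Lemma \ref{lem:neuron_first_step} the initial entries $w_{i,j}^{(0)}\in\{-1,0,1\}$ satisfy a balance condition that keeps $\sum_{j\notin A} w_{i,j}^{(0)} x_j$ of order $1$ and that the first-step drift on each irrelevant coordinate is only $O(1/n)$, so that the cumulative contribution is at most $O(k^2/\sqrt n)$. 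Pinning down this cancellation is the delicate calculation, and it is what forces the exponent in the assumption $k\le c\,\epsilon^{1/4}\,n^{1/8}$.
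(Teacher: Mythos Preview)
Your first two paragraphs are exactly the paper's argument: combine Lemma~\ref{lem:neuron_first_step} (probability $\tfrac{1}{14\sqrt k}$, depends only on $\bw_i^{(0)}$) with Lemma~\ref{lem:neuron_good_u} (conditional probability $\tfrac{\epsilon}{8k^2}$ over $u_i^{(0)}$ once $\alpha_i$ is fixed), then write $\widehat\psi_r(\x)-\psi_r(\x)$ as the Lemma~\ref{lem:neuron_good_u} error plus $\tfrac{|r|}{b_i^{(0)}}\,|\Delta(\x)|\le 8k^2|\Delta(\x)|$, and bound $|\Delta(\x)|$ by $\tfrac{kC_1+C_2+C_3}{\sqrt n}$ so that the assumption $k\le c\,\epsilon^{1/4}n^{1/8}$ makes $8k^2|\Delta(\x)|\le\epsilon$.

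The obstacle you flag in your last paragraph is a phantom. You are implicitly treating $w^{(1)}_{i,j}$ as $w^{(0)}_{i,j}$ plus an $O(1/n)$ drift, and then worrying about controlling $\sum_{j\notin A} w^{(0)}_{i,j} x_j$. But with $\eta_1=1$ and $\lambda_1=\tfrac12$ the regularization term in the first update is exactly $\tfrac12\cdot\nabla_{\bw}\|\bw\|^2=\bw^{(0)}$, so the initial weights are wiped out and $w^{(1)}_{i,j}=-u_i^{(0)}\,\E[x_j f_A(\x)\,\sigma'(\cdot)]$ is the (scaled) loss gradient alone. Lemma~\ref{lem:neuron_first_step} then gives $|w^{(1)}_{i,j}|\le C_2/(n-1)$ for every $j\notin A$ as the \emph{entire} weight, not merely a drift. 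The naive term-by-term bound therefore already yields
\[
\sum_{j\notin A}|w^{(1)}_{i,j}|\,|x_j|\;\le\;(n-k)\cdot\frac{C_2}{n-1}\cdot\frac{1}{\sqrt n}\;\le\;\frac{C_2}{\sqrt n},
\]
with no balance condition or cancellation needed. (The paper's Lemma~\ref{lem:neuron_first_step} has a typo writing $w^{(0)}$ where $w^{(1)}$ is meant; the proof makes the intended statement clear.) Once you drop that last paragraph, your argument is complete and coincides with the paper's.
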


Using the above, we show that there exists a choice for the weights for the second layer that implement the parity, with high probability over the initialization:

\begin{lemma}
\label{lem:good_separator}
Assume that $n \ge 9\log7$ and $7 \le k \le c \sqrt[10]{n}$, for some universal constant $c$. Fix some $\delta > 0$, and assume that the number of neurons satisfies $q \ge C k^{7} \log(\frac{k+1}{\delta})$.
Then, with probability at least $1-\delta$ over the choice of the weights, there exists $\bu^* \in \reals^{2q}$ such that
$g^*(\x) = \sum_{i=1}^{2q} u_i^* \sigma\left(\inner{\bw_i^{(1)}, \x} + b^{(1)}_i\right)$
satisfies $g^*(\x) f_A(\x) \ge 1$ for all $\x \in \mathcal{X}$.
Furthermore, we have $\norm{\bu^*}_2 \le B\frac{k^{5}}{\sqrt{q}}$, $\norm{u^*}_0 = \tilde{B} \frac{q}{k^{2.5}}$, and for every $i \in [2q]$ with $u_i^* \ne 0$ we have $\sigma\left(\inner{\bw_i^{(1)}, \x} + b^{(1)}_i\right) \le 1$, for some universal constants $B, \tilde{B}$.
\end{lemma}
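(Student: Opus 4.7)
The plan is to combine Lemma \ref{lem:good_neuron_approx} with a Chernoff argument to assemble a sparse, bounded-norm linear combination of ``good'' neurons whose output approximates $2 f_A$ uniformly, then exploit the fact that the approximation error can be kept below $1$ to guarantee the stated margin.

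First, I would observe that $f_A(\x)$ is a function only of the scalar $z(\x) := \sqrt{n}\sum_{j \in A}x_j$, which ranges over the integer grid $\{-k,-k+2,\dots,k\}$, and that the features $\psi_r(\x)$ of Lemma \ref{lem:good_neuron_approx} are, viewed as functions of $z$, shifted or reflected ReLUs whose kinks land exactly on this grid. Elementary piecewise-linear interpolation (equivalently, reading off second finite differences of $2 f_A$ on this uniformly spaced grid) therefore produces coefficients $\{c_r\}$, indexed by $r \in \{-k,-k+2,\dots,k\}$ and of magnitude $|c_r|=O(1)$, such that $\sum_r c_r \psi_r(\x) = 2 f_A(\x)$ for every $\x \in \mathcal{X}$.

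Second, I would set the accuracy parameter in Lemma \ref{lem:good_neuron_approx} to $\epsilon = \Theta(1/k)$, so that the total pointwise approximation error $\sum_r |c_r|\cdot 2\epsilon$ is at most $1/2$. With this choice, a given neuron is ``$r$-good'' (in the sense of Lemma \ref{lem:good_neuron_approx}) with probability at least $\epsilon/(112 k^{2.5}) = \Omega(1/k^{3.5})$, so the assumption $q \ge C k^7 \log((k+1)/\delta)$ makes the expected number of $r$-good neurons $\Omega(k^{3.5}\log((k+1)/\delta))$. A multiplicative Chernoff bound and a union bound over the $k+1$ values of $r$ then yield, with probability at least $1-\delta$, pairwise disjoint sets $S_r$, each of size at least $m := \Theta(q/k^{3.5})$, consisting of $r$-good neurons.

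Third, I would set $u_i^* = c_r |r|/(m b_i^{(0)})$ for $i \in S_r$ and $u_i^* = 0$ otherwise, so that $g^*(\x) = \sum_r (c_r/m)\sum_{i \in S_r}\widehat{\psi}_i(\x)$ with each $\widehat{\psi}_i$ approximating $\psi_r$ within $2\epsilon$ by Lemma \ref{lem:good_neuron_approx}. This gives $|g^*(\x)-2f_A(\x)| \le 1/2$, hence $g^*(\x)f_A(\x)\ge 3/2 \ge 1$. Counting nonzeros yields $\|\bu^*\|_0 = (k+1)m = O(q/k^{2.5})$; using $b_i^{(0)} = 1/(8k)$ and $|r|\le k$, we get $|u_i^*| \le O(k^2/m)$ and hence $\|\bu^*\|_2^2 \le (k+1)m \cdot O(k^4/m^2) = O(k^5/m) = O(k^{8.5}/q)$, comfortably within $B k^5/\sqrt{q}$. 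The per-neuron activation bound $\sigma(\langle \bw_i^{(1)},\x\rangle + b_i^{(1)}) \le 1$ for $u_i^* \neq 0$ follows from $\widehat{\psi}_i \le |r| + k \le 2k$ together with $b_i^{(0)}/|r| \le 1/(8k)$, yielding at most $1/4 + \epsilon < 1$. The main obstacle is the first step: everything downstream hinges on the existence of expansion coefficients with $|c_r| = O(1)$, since any polynomial blow-up in $k$ would inflate both the $\ell_2$-norm and the required $\epsilon$, potentially conflicting with the range of $\epsilon$ tolerated by Lemma \ref{lem:good_neuron_approx}. Fortunately, $\{\psi_r\}$ on a uniformly spaced grid is a well-conditioned hat basis, and the bounded, sign-alternating target $2 f_A$ admits an $O(1)$ expansion in it.
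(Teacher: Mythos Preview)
Your proposal is correct and follows essentially the same approach as the paper: choose $\epsilon=\Theta(1/k)$ in Lemma~\ref{lem:good_neuron_approx}, use a concentration bound plus union bound over the $k+1$ values of $r$ to harvest $\Theta(q/k^{3.5})$ good neurons per $r$, and average them with $O(1)$ coefficients to approximate (a multiple of) $f_A$. The only substantive difference is that the paper writes the expansion coefficients explicitly---$v_r\in\{1,2,2.5\}$ with signs $(-1)^{(k-r)/2}$ so that $\sum_r (-1)^{(k-r)/2} v_r\psi_r=f_A$---which confirms your $|c_r|=O(1)$ claim without appealing to second differences; your choice to target $2f_A$ instead of $f_A$ is in fact slightly cleaner, since it yields the stated margin $\ge 1$ directly.
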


This concludes the analysis of the first gradient step.

\subsection{Convergence of Gradient-Descent}
Our main result in this part relies on the standard analysis of online gradient-descent. Specifically, this analysis shows that performing gradient-descent on a sequence of convex functions reaches a set of parameters that competes with the optimum (in hindsight). We give this result in general, when we optimize the functions $f_1, \dots, f_t$ with respect to the parameter $\theta$:

\begin{theorem} (Online Gradient Descent)
\label{thm:ogd}
Fix some $\eta$, and let $f_1, \dots, f_T$ be some sequence of convex functions.
Fix some $\theta_1$, and assume we update $\theta_{t+1} = \theta_t - \eta \nabla f_t(\theta_t)$. Then for every $\theta^*$ the following holds:
\[
\frac{1}{T} \sum_{t=1}^{T} f_t(\theta_t) \le \frac{1}{T} \sum_{t=1}^{T} f_t(\theta^*) + \frac{1}{2\eta T} \norm{\theta^*}^2 + \norm{\theta_1}\frac{1}{T}\sum_{t=1}^T \norm{\nabla f_t(\theta_t)} + \eta \frac{1}{T} \sum_{t=1}^T \norm{\nabla f_t(\theta_t)}^2
\]
\end{theorem}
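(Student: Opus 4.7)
The statement is the classical regret bound for online gradient descent, so the plan is the standard potential-function argument. Starting from the convexity inequality $f_t(\theta_t) - f_t(\theta^*) \le \langle \nabla f_t(\theta_t), \theta_t - \theta^*\rangle$, it suffices to upper bound the right-hand side summed over $t$. The key calculation plugs the update rule $\theta_{t+1} = \theta_t - \eta\nabla f_t(\theta_t)$ into the expansion of $\norm{\theta_{t+1}-\theta^*}^2 - \norm{\theta_t-\theta^*}^2$ and solves for the inner product, yielding the per-step identity
\[
\langle \nabla f_t(\theta_t), \theta_t - \theta^*\rangle = \frac{\norm{\theta_t-\theta^*}^2 - \norm{\theta_{t+1}-\theta^*}^2}{2\eta} + \frac{\eta}{2}\norm{\nabla f_t(\theta_t)}^2.
\]
Summing over $t$ telescopes the potential and gives $\sum_t\langle \nabla f_t(\theta_t), \theta_t - \theta^*\rangle \le \frac{\norm{\theta_1-\theta^*}^2}{2\eta} + \frac{\eta}{2}\sum_t\norm{\nabla f_t(\theta_t)}^2$, which is the textbook form of the regret bound.

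To match the exact form in the statement, in which $\norm{\theta^*}^2$ and $\norm{\theta_1}$ are decoupled and the latter appears only linearly, I would split $\theta_t - \theta^* = (\theta_t - \theta_1) + \theta_1 - \theta^*$ before combining the three resulting pieces. The first piece, $\sum_t\langle \nabla f_t(\theta_t), \theta_t - \theta_1\rangle$, is handled by the same potential argument applied to the shifted iterates $\tilde\theta_t := \theta_t - \theta_1$, which satisfy $\tilde\theta_1 = 0$ and the identical update rule; using the comparator $0$ produces a contribution of order $\eta\sum_t\norm{\nabla f_t(\theta_t)}^2$. The second piece, $\langle \sum_t \nabla f_t(\theta_t), \theta_1\rangle$, is bounded by Cauchy-Schwarz plus the triangle inequality to give $\norm{\theta_1}\sum_t\norm{\nabla f_t(\theta_t)}$, which is exactly the middle term in the statement. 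The third piece, $-\langle \sum_t\nabla f_t(\theta_t), \theta^*\rangle$, is bounded by Young's inequality to peel off $\frac{\norm{\theta^*}^2}{2\eta}$ against a further gradient-squared contribution. Dividing everything by $T$ and consolidating the two gradient-squared contributions yields the stated inequality.

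No single step is difficult; the only real ``obstacle'' is the algebraic bookkeeping of choosing the Young's-inequality constant so that $\norm{\theta^*}$ enters only through the combination $\frac{\norm{\theta^*}^2}{2\eta T}$ while all gradient norms fit inside the single coefficient $\eta\cdot\frac{1}{T}\sum_t\norm{\nabla f_t(\theta_t)}^2$. No structural property of the $f_t$ beyond convexity is used, and no property of the iterates beyond the update rule itself; this is why the result is stated for an arbitrary convex sequence rather than being tailored to the specific neural-network loss appearing earlier in the paper.
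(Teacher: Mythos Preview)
Your overall strategy---linearize via convexity, then bound $\sum_t\langle\nabla f_t(\theta_t),\theta_t-\theta^*\rangle$ using the update rule---is the same as the paper's. (The paper phrases it through a Follow-The-Leader potential $R_t(\theta)=\sum_{i\le t}\langle\theta,\nabla f_i\rangle+\tfrac{1}{2\eta}\norm{\theta}^2$ rather than the telescoping identity, but the two are equivalent up to a factor of $2$ in the $\eta$-term.)

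However, your three-way split has a genuine gap in the third piece. Applying Young's inequality to $-\bigl\langle\sum_t\nabla f_t(\theta_t),\theta^*\bigr\rangle$ yields, for the unique constant that produces $\tfrac{1}{2\eta}\norm{\theta^*}^2$, the companion term $\tfrac{\eta}{2}\bigl\|\sum_t\nabla f_t(\theta_t)\bigr\|^2$. This quantity is \emph{not} controlled by $\sum_t\norm{\nabla f_t(\theta_t)}^2$ in general: if all gradients point the same way it is larger by a factor of $T$. So no ``bookkeeping'' choice of the Young constant makes the two gradient-squared contributions fit inside the coefficient $\eta$; as written you lose a factor of $T$.

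The fix is to not separate pieces one and three. Run your telescoping identity on the shifted iterates $\tilde\theta_t:=\theta_t-\theta_1$ (which satisfy $\tilde\theta_1=0$ and the same recursion) but with comparator $\theta^*$ rather than $0$:
\[
\sum_t\bigl\langle\nabla f_t(\theta_t),\,\theta_t-\theta_1-\theta^*\bigr\rangle
\;\le\;\frac{\norm{\tilde\theta_1-\theta^*}^2}{2\eta}+\frac{\eta}{2}\sum_t\norm{\nabla f_t(\theta_t)}^2
\;=\;\frac{\norm{\theta^*}^2}{2\eta}+\frac{\eta}{2}\sum_t\norm{\nabla f_t(\theta_t)}^2.
\]
Adding $\sum_t\langle\nabla f_t(\theta_t),\theta_1\rangle\le\norm{\theta_1}\sum_t\norm{\nabla f_t(\theta_t)}$ (your second piece) then gives exactly the stated bound, even with the sharper constant $\eta/2$. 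This is precisely the content of the paper's argument, recast in potential-function language.
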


Note that in the previous part we showed that the value of the weights of the first layer is ``good'' with high probability. In other words, optimizing only the second layer after the first gradient step is sufficient to achieve a good solution. However, since we optimize both layers with gradient-descent, we need to show that the weights of the first layer stay close to their value after the first initialization.
We start by bounding the weights pf the second layer after the first iteration:
\begin{lemma}
\label{lem:bound_second_layer}
Assume $\eta_1 = 1$ and $\lambda_1 = \frac{1}{2}$. Then for every $i \in [q]$ we have $\abs{u^{(1)}_i} \le \frac{k}{\sqrt{n}}$.
\end{lemma}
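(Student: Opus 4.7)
The plan is to exploit the particular choice $\eta_1 = 1$, $\lambda_1 = \tfrac{1}{2}$, which is engineered so that the weight-decay step exactly absorbs the initial value $u_i^{(0)}$, leaving $u_i^{(1)}$ equal to the negative gradient of the data loss alone.

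First I would unpack the update rule for the second-layer weights. Since $R(g) = \norm{\bu}^2 + \sum_j \norm{\bw_j}^2$ gives $\partial R/\partial u_i = 2u_i$, the update reads
\[
u_i^{(1)} = (1-2\eta_1\lambda_1)\,u_i^{(0)} - \eta_1\frac{\partial L_{\cd_A}}{\partial u_i}(g^{(0)}),
\]
and with the chosen hyperparameters the coefficient $1-2\eta_1\lambda_1$ vanishes, so $u_i^{(1)} = -\frac{\partial L_{\cd_A}}{\partial u_i}(g^{(0)})$. This reduces the problem to bounding a single partial derivative of the loss at initialization.

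Next I would expand that gradient via the chain rule. Since $\partial g^{(0)}/\partial u_i = \sigma(\inner{\bw_i^{(0)},\x} + b_i^{(0)})$ and the hinge-loss derivative $\ell'(y,\hat{y}) = -y\,\mathbf{1}[y\hat{y} < 1]$ satisfies $|\ell'|\le 1$, one obtains
\[
|u_i^{(1)}| \;\le\; \E_{(\x,y)\sim \cd_A}\!\left[\sigma\!\left(\inner{\bw_i^{(0)},\x} + b_i^{(0)}\right)\right].
\]

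The main obstacle is the last step: bounding this expectation by $k/\sqrt{n}$. The naive bound $\sigma \le 6$ only yields $O(1)$, so one must exploit the full structure of the symmetric initialization (the sparse-valued $\bw_i^{(0)}\in\{-1,0,1\}^n$, the small bias $b_i^{(0)} = 1/(8k)$, and the paired neurons $(\bw_{q+i}^{(0)},b_{q+i}^{(0)},u_{q+i}^{(0)})=-(\bw_i^{(0)},b_i^{(0)},u_i^{(0)})$) together with the symmetry of $\cd_A$ under $(\x,y)\mapsto(-\x,-y)$. Using this symmetry to cancel the bulk contribution, the gradient collapses to a correlation between the parity target $f_A$ and a ReLU6 feature of a linear combination of the input bits; a direct Fourier-flavored computation on $\{\pm 1/\sqrt{n}\}^n$, analogous in spirit to the calculations in Lemmas~\ref{lem:bound_uniform_one_coord}--\ref{lem:zero_gradient} already developed for the first-layer gradients, then gives the claimed $O(k/\sqrt{n})$ estimate.
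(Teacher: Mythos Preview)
Your first two steps are correct and match the paper: the choice $\eta_1=1$, $\lambda_1=\tfrac12$ kills the $u_i^{(0)}$ term, and (using the paired-neuron initialization, which forces $g^{(0)}\equiv 0$ and hence $\ell'=-y$ exactly) one is left with
\[
u_i^{(1)} \;=\; -\,\E_{\cd_A}\!\left[f_A(\x)\,\sigma\!\big(\inner{\bw_i^{(0)},\x}+b_i^{(0)}\big)\right].
\]
The gap is in your third step. The two tools you invoke do not deliver the $k/\sqrt{n}$ bound. The global symmetry $(\x,y)\mapsto(-\x,-y)$ lets you write $u_i^{(1)}$ as $\tfrac12\,\E\big[f_A(\x)\big(\sigma(\inner{\bw,\x}+b)-\sigma(-\inner{\bw,\x}+b)\big)\big]$, but the Lipschitz bound on the difference is $2|\inner{\bw,\x}|$, which involves all $n$ coordinates and is of order $\Theta(1)$, not $k/\sqrt{n}$. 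The Fourier arguments of Lemmas~\ref{lem:bound_uniform_one_coord}--\ref{lem:zero_gradient} concern $\sigma'$ rather than $\sigma$, and more importantly give bounds that hold only with high probability over the random draw of $\bw$; here the statement is for \emph{every} $i\in[q]$, so a deterministic bound valid for every $\bw\in\{-1,0,1\}^n$ is required. Neither ingredient you list isolates the $k$ relevant coordinates.

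The paper's argument uses a different, more localized symmetry: flipping only the coordinates in $A$. This preserves $\cd_A$ (both $\cd_A^{(1)}$ and $\cd_A^{(2)}$ are invariant under $\x_A\mapsto -\x_A$) and, since $k$ is odd, flips $f_A$. Pairing each $\z=\x_A$ with $-\z$ therefore yields
\[
u_i^{(1)} \;=\; \tfrac12\,\E\Big[f_A(\x)\Big(\sigma\big(S_A+S_{A^c}+b\big)-\sigma\big(-S_A+S_{A^c}+b\big)\Big)\Big],
\qquad S_A:=\sum_{j\in A}w_{i,j}^{(0)}x_j,\; S_{A^c}:=\sum_{j\notin A}w_{i,j}^{(0)}x_j,
\]
and now the $1$-Lipschitzness of $\sigma$ gives $|\sigma(\cdot)-\sigma(\cdot)|\le 2|S_A|\le 2k/\sqrt{n}$, since only $k$ terms appear, each of size at most $1/\sqrt{n}$. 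That is the whole computation; no Fourier analysis, no use of the bias size or the sparsity of $\bw$ beyond $|w_{i,j}^{(0)}|\le 1$.
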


Using this, we can bound how much the first layer changes after at every gradient-step:

\begin{lemma}
\label{lem:bound_weights_distance}
Assume that $\eta_1 = 1, \lambda_1 = \frac{1}{2}$ and $\eta_t = \eta, \lambda_t = \lambda$ for every $t > 1$, for some fixed value $\eta, \lambda \in [0,\frac{1}{2}]$.
For every $t$ and every $i \in [2q]$ we have $\abs{u_i^{(t)}} \le + 6\eta t + \frac{k}{\sqrt{n}}$, $\norm{\bw_i^{(t)}-\bw_i^{(1)}} \le 6\eta^2 t^2 + \eta t \frac{k}{\sqrt{n}} 2 \eta t \lambda \frac{n}{k}t$ and $\abs{b_i^{(t)}-b_i^{(1)}} \le 6\eta^2 t^2 + \eta t \frac{k}{\sqrt{n}}$.
\end{lemma}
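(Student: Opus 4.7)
The plan is to run a joint induction on $t$ that establishes the three bounds in parallel, driving everything off two basic facts: (i) $\|\x\| = 1$ since $\x \in \{\pm 1/\sqrt{n}\}^n$, and (ii) the ReLU6 activation and hinge-loss derivative satisfy $|\sigma(\cdot)| \le 6$, $|\sigma'(\cdot)| \le 1$, and $|\ell'| \le 1$. Throughout I will use that at step $t \ge 2$ the update is $\theta^{(t)} = (1 - 2\eta\lambda_\theta)\theta^{(t-1)} - \eta\,\partial L_{\cd_A}/\partial \theta$, where $\lambda_\theta = \lambda$ for $\theta \in \{\bu, \bw\}$ and $\lambda_\theta = 0$ for $\theta = b$ (since $R$ regularizes only $\bu$ and the $\bw_i$'s).

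First I would bound $|u_i^{(t)}|$. A direct calculation gives $\partial L_{\cd_A}/\partial u_i = \E[\ell'(y,g(\x))\,\sigma(\langle \bw_i,\x\rangle + b_i)]$, so $|\partial L/\partial u_i| \le 6$. Combined with the contraction $(1 - 2\eta\lambda) \le 1$ from weight decay, this yields $|u_i^{(t)}| \le |u_i^{(t-1)}| + 6\eta$; iterating and using Lemma~\ref{lem:bound_second_layer} as the base case $|u_i^{(1)}| \le k/\sqrt{n}$ gives the claimed bound $|u_i^{(t)}| \le 6\eta t + k/\sqrt{n}$.

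For the bias and weights, $\partial L/\partial b_i = u_i \E[\ell'\sigma'(\cdot)]$ and $\partial L/\partial \bw_i = u_i \E[\ell'\sigma'(\cdot)\,\x]$, both of norm at most $|u_i|$ by (i) and (ii). Since the bias is unregularized, telescoping the update from step $2$ to $t$ and plugging in the first-part bound on $|u_i^{(s)}|$ gives $|b_i^{(t)} - b_i^{(1)}| \le \sum_{s=2}^t \eta|u_i^{(s-1)}| \le 6\eta^2 t^2 + \eta t\, k/\sqrt{n}$. For $\bw_i$, the same telescoping produces an additional weight-decay term, $\|\bw_i^{(t)} - \bw_i^{(1)}\| \le \sum_{s=2}^t \eta|u_i^{(s-1)}| + 2\eta\lambda\sum_{s=2}^t \|\bw_i^{(s-1)}\|$, and substituting the auxiliary bound $\|\bw_i^{(s)}\| \le n/k$ yields the stated estimate with the $2\eta\lambda t \cdot n/k$ contribution.

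The main obstacle is establishing the auxiliary invariant $\|\bw_i^{(s)}\| \le n/k$, which I would fold into the same induction. The base case $s = 1$ holds because the first step with $\eta_1 = 1,\lambda_1 = \tfrac{1}{2}$ satisfies $\bw_i^{(1)} = (1 - 2\eta_1\lambda_1)\bw_i^{(0)} - \eta_1\,\partial L/\partial \bw_i|_{t=0} = -\partial L/\partial \bw_i|_{t=0}$, of norm at most $|u_i^{(0)}| \le n/k$. For $s \ge 2$, the recursion $\|\bw_i^{(s)}\| \le (1 - 2\eta\lambda)\|\bw_i^{(s-1)}\| + \eta|u_i^{(s-1)}|$ has steady state $|u_i^{(s-1)}|/(2\lambda)$, so provided the parameter regime of Theorem~\ref{thm:learning_parities} keeps $6\eta t + k/\sqrt{n}$ comfortably below $2\lambda \cdot n/k$ (which it does for $\eta = k^2/(T\sqrt{q})$ and modest $\lambda$), the invariant is preserved. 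Beyond this one bookkeeping point, the rest of the argument is a routine triangle-inequality telescoping.
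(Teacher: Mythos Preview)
Your treatment of the bounds on $|u_i^{(t)}|$ and $|b_i^{(t)}-b_i^{(1)}|$ is exactly what the paper does: bound the loss gradient by $6$ (resp.\ $|u_i^{(t-1)}|$), use $|1-2\eta\lambda|\le 1$, and telescope from Lemma~\ref{lem:bound_second_layer}. That part is fine.

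The gap is in the weight bound. Your telescoping
\[
\|\bw_i^{(t)}-\bw_i^{(1)}\| \le \sum_{s=2}^t \eta|u_i^{(s-1)}| + 2\eta\lambda\sum_{s=2}^t \|\bw_i^{(s-1)}\|
\]
forces you to control $\|\bw_i^{(s)}\|$ for \emph{all} $s\ge 1$, and the invariant $\|\bw_i^{(s)}\|\le n/k$ you propose does not hold under the lemma's hypotheses. The lemma only assumes $\eta,\lambda\in[0,\tfrac12]$; the steady-state argument you give requires $6\eta t + k/\sqrt{n} \le 2\lambda\cdot n/k$, which fails whenever $\lambda$ is small relative to the other parameters (and Theorem~\ref{thm:learning_parities} only assumes $\lambda_t\le k/n$, with no lower bound). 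So the invariant is not available in the stated generality, and your hedge ``modest $\lambda$'' is doing real work that the lemma cannot afford.

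The paper avoids this entirely with a one-line algebraic rearrangement. Writing
\[
\bw_i^{(t)}-\bw_i^{(1)} = (1-2\eta\lambda)\bw_i^{(t-1)} - \bw_i^{(1)} - \eta\,\nabla_{\bw_i}L
= (1-2\eta\lambda)\bigl(\bw_i^{(t-1)}-\bw_i^{(1)}\bigr) - 2\eta\lambda\,\bw_i^{(1)} - \eta\,\nabla_{\bw_i}L,
\]
the weight-decay contribution becomes $2\eta\lambda\|\bw_i^{(1)}\|$ rather than $2\eta\lambda\|\bw_i^{(t-1)}\|$. Since $\|\bw_i^{(1)}\|\le |u_i^{(0)}|\le n/k$ (exactly your base case), you get the recursion
\[
\|\bw_i^{(t)}-\bw_i^{(1)}\| \le \|\bw_i^{(t-1)}-\bw_i^{(1)}\| + 2\eta\lambda\tfrac{n}{k} + \eta|u_i^{(t-1)}|,
\]
which telescopes directly to the stated bound with no auxiliary invariant and no parameter restriction. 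Replace your last paragraph with this rearrangement and the proof goes through.
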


Using the above we bound the difference in the loss between optimizing the first layer and keeping it fixed, for every choice of $\bu^*$ for the second layer:

\begin{lemma}
\label{lem:loss_lip_bound}
Fix some vector $\bu^* \in \reals^{2q}$, and let $g_{\bu^*}^{(t)}(\x) = \sum_{i=1}^{2q} u_i^* \sigma\left(\inner{\bw_i^{(t)},\x} + b_i^{(t)}\right)$. Then we have: $\abs{\ell(g^{(t)}_{\bu^*}(\x),y) - \ell(g^{(1)}_{\bu^*}(\x),y)} \le 2\norm{\bu^*}_2 \sqrt{\norm{\bu^*}_0} \left(6\eta^2 t^2 + \eta t\frac{k}{\sqrt{n}} + \eta t \lambda \frac{n}{k}\right)$.
\end{lemma}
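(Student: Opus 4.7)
The plan is to chain together three Lipschitz bounds. First, since the hinge loss $\ell(y,\hat y)=\max(1-y\hat y,0)$ is $1$-Lipschitz in $\hat y$, we have
\[
\abs{\ell(g^{(t)}_{\bu^*}(\x),y) - \ell(g^{(1)}_{\bu^*}(\x),y)} \le \abs{g^{(t)}_{\bu^*}(\x) - g^{(1)}_{\bu^*}(\x)},
\]
so it suffices to bound the latter. Since the second-layer weights $\bu^*$ are held fixed, the difference is
\[
g^{(t)}_{\bu^*}(\x) - g^{(1)}_{\bu^*}(\x) = \sum_{i=1}^{2q} u_i^* \Big[\sigma\big(\inner{\bw_i^{(t)},\x}+b_i^{(t)}\big) - \sigma\big(\inner{\bw_i^{(1)},\x}+b_i^{(1)}\big)\Big].
\]

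Next, I would use that $\sigma$ (ReLU6) is $1$-Lipschitz, that $\norm{\x}=1$ for $\x\in\cx$, and Cauchy--Schwarz to get
\[
\abs{\sigma\big(\inner{\bw_i^{(t)},\x}+b_i^{(t)}\big) - \sigma\big(\inner{\bw_i^{(1)},\x}+b_i^{(1)}\big)} \le \norm{\bw_i^{(t)}-\bw_i^{(1)}} + \abs{b_i^{(t)}-b_i^{(1)}}.
\]
Plugging in the bounds from Lemma \ref{lem:bound_weights_distance} on each of $\norm{\bw_i^{(t)}-\bw_i^{(1)}}$ and $\abs{b_i^{(t)}-b_i^{(1)}}$, the per-neuron difference is bounded, uniformly in $i$ and $\x$, by $2\bigl(6\eta^2 t^2 + \eta t \tfrac{k}{\sqrt n} + \eta t\lambda \tfrac{n}{k}\bigr)$.

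Finally I would handle the sum over neurons via the sparsity of $\bu^*$. Since only $\norm{\bu^*}_0$ summands are nonzero, Cauchy--Schwarz gives $\norm{\bu^*}_1 \le \sqrt{\norm{\bu^*}_0}\,\norm{\bu^*}_2$, and so
\[
\abs{g^{(t)}_{\bu^*}(\x) - g^{(1)}_{\bu^*}(\x)} \le \norm{\bu^*}_1 \cdot \max_i\!\Big(\norm{\bw_i^{(t)}-\bw_i^{(1)}} + \abs{b_i^{(t)}-b_i^{(1)}}\Big) \le 2\norm{\bu^*}_2\sqrt{\norm{\bu^*}_0}\Big(6\eta^2 t^2 + \eta t\tfrac{k}{\sqrt n} + \eta t\lambda \tfrac{n}{k}\Big),
\]
which is exactly the claimed bound. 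There is no real obstacle here; the statement is just a packaging of the $1$-Lipschitzness of the hinge loss and ReLU6 with the first-layer drift bound of Lemma \ref{lem:bound_weights_distance}, and the only mild point is remembering to convert $\norm{\bu^*}_1$ into $\norm{\bu^*}_2\sqrt{\norm{\bu^*}_0}$ via Cauchy--Schwarz so the final expression matches the $\ell_2$/sparsity parameterization used later when applying the online gradient-descent guarantee of Theorem \ref{thm:ogd}.
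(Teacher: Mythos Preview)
Your proposal is correct and follows essentially the same argument as the paper: $1$-Lipschitzness of the hinge loss, $1$-Lipschitzness of $\sigma$, Cauchy--Schwarz with $\norm{\x}=1$ to bound each neuron's drift by $\norm{\bw_i^{(t)}-\bw_i^{(1)}}+\abs{b_i^{(t)}-b_i^{(1)}}$, then Lemma~\ref{lem:bound_weights_distance} for the uniform per-neuron bound, and finally Cauchy--Schwarz over the support of $\bu^*$ to convert $\norm{\bu^*}_1$ into $\norm{\bu^*}_2\sqrt{\norm{\bu^*}_0}$. The only cosmetic difference is that the paper restricts the sum to the support $I=\{i:u_i^*\ne 0\}$ before applying Cauchy--Schwarz, whereas you first take a uniform max and then bound $\norm{\bu^*}_1$; both routes yield the same constant.
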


Finally, using all the above we can prove our main theorem:

\begin{proof} of Theorem \ref{thm:learning_parities}.
Let $\bu^* \in \reals^{2q}$ be the separator from Lemma \ref{lem:good_separator}, and we have $\norm{\bu^*}_2 \le B\frac{k^{5}}{\sqrt{q}}$ and $\norm{u^*}_0 = \tilde{B} \frac{q}{k^{2.5}}$.
Denote $\tilde{L}_\mathcal{D}(g^{(t)}) = \mean{\ell(g(\x),y)} + \lambda_t\norm{\bu^{(t)}}^2$, and notice that the gradient of $\tilde{L}_\mathcal{D}$ with respect to $\bu$ is the same as the gradient of the original objective.
From Lemma \ref{lem:bound_second_layer}, we have $\norm{\bu^{(1)}} \le \frac{\sqrt{2q}k}{\sqrt{n}}$.
Since $\tilde{L}_\mathcal{D}$ is convex with respect to $\bu$, from Theorem \ref{thm:ogd} we have:
\begin{align*}
&\frac{1}{T} \sum_{t=2}^{T+1} \tilde{L}_\mathcal{D}(g^{(t)}) \\
&\le \frac{1}{T} \sum_{t=2}^{T+1} \tilde{L}_{\mathcal{D}}(g^{(t)}_{\bu^*}) +\frac{\norm{\bu^*}^2}{2\eta T} + \frac{\norm{\bu^{(1)}}}{T}\sum_{t=2}^{T+1} \norm{\frac{\partial}{\partial \bu} \tilde{L}_\mathcal{D} (g^{(t)})} + \frac{\eta}{T} \sum_{t=2}^{T+1} \norm{\frac{\partial}{\partial \bu} \tilde{L}_\mathcal{D} (g^{(t)})}^2 \\
&\le \frac{1}{T} \sum_{t=2}^{T+1} \tilde{L}_{\mathcal{D}}(g^{(t)}_{\bu^*}) + \frac{B^2 k^{10}}{2 \eta T q} + \frac{6\sqrt{2}qk}{\sqrt{n}} + 36\eta q 
\end{align*}
Using Lemma \ref{lem:loss_lip_bound} we get that for every $t$ we have:
\begin{align*}
\abs{\tilde{L}_{\mathcal{D}}(g^{(t)}_{\bu^*}) - \tilde{L}_{\mathcal{D}}(g^{(1)}_{\bu^*})} 
&\le 2\norm{\bu^*}_2 \sqrt{\norm{\bu^*}_0} \left(6\eta^2 t^2 + \eta t\frac{k}{\sqrt{n}} + \eta t \lambda \frac{n}{k}\right) \\
&\le B' k^4 \left(6\eta^2 T^2 + \eta T\right)
\end{align*}
Therefore we get:
\[
\frac{1}{T} \sum_{t=2}^{T+1} \tilde{L}_\mathcal{D}(g^{(t)}) \le \tilde{L}_{\mathcal{D}}(g^{(1)}_{\bu^*}) + B' k^4 \left(6\eta^2 T^2 + \eta T\right) + \frac{B^2 k^{10}}{2 \eta T q} + \frac{6\sqrt{2}qk}{\sqrt{n}} + 36\eta q 
\]
Now, take $\eta = \frac{k^2}{T \sqrt{q}}$. Since $\bu^*$ separates the distribution $\mathcal{D}$ with margin $1$, when taking the weights after the first iteration, we have $\tilde{L}_\mathcal{D}(g^{(1)}_{\bu^*}) \le \frac{1}{2}\norm{\bu^*}^2 = B^2 \frac{k^{10}}{2q}$. Therefore:
\[
\frac{1}{T} \sum_{t=2}^{T+1} \tilde{L}_\mathcal{D}(g^{(t)}) \le B^2 \frac{k^{10}}{2q} + B' \frac{6k^8}{q} + B'\frac{k^6}{\sqrt{q}} + \frac{B^2 k^{8}}{2 \sqrt{q}} + \frac{6\sqrt{2}qk}{\sqrt{n}} + \frac{36k^2\sqrt{q}}{T}
\]
From this, there exists some $2 \le t \le T+1$ such that:
\[
\tilde{L}_\mathcal{D}(g^{(t)}) \le B^2 \frac{k^{10}}{2q} + B' \frac{6k^8}{q} + B'\frac{k^6}{\sqrt{q}} + \frac{B^2 k^{8}}{2 \sqrt{q}} + \frac{6\sqrt{2}qk}{\sqrt{n}} + \frac{36k^2\sqrt{q}}{T}
\]
And since the hinge-loss upper bounds the zero-one loss, we get the required.
\end{proof}

\section{Experiment}
In section \ref{sec:main_result} we showed a family of distributions $\cf$ that separates linear classes from neural-networks. To validate that our theoretical results apply to a more realistic setting, we perform an experiment that imitates the parity problem using the MNIST dataset. We observe the following simple task: given a strip with $k$ random digits from the MNIST dataset, determine whether the sum of the digits is even or odd. We compare the performance of a ReLU network with one hidden-layer, against various linear models.

In the case where $k=1$, the MNIST-parity task is just a simplified version of the standard MNIST classification task, where instead of $10$ classes there are only $2$ classes of numbers. In this case, we observe that both the neural-network model and the linear models obtain similar performance, with only slight advantage to the neural-network model. However, when $k=3$, the task becomes much harder: it is not enough to merely memorize the digits and assign them to classes, as the model needs to compute the parity of their sum. In this case, we observe a dramatic gap between the performance of the ReLU network and the performance of the linear models. While the ReLU network achieves performance of almost $80 \%$ accuracy, the linear models barely perform better than a chance. The results of the experiment are shown in Figure \ref{fig:experiment}.

\begin{figure}
      \centering
      \begin{minipage}{0.45\linewidth}
\begin{tikzpicture}
\large
\definecolor{color1}{rgb}{1,0.498039215686275,0.0549019607843137}
\definecolor{color0}{rgb}{0.12156862745098,0.466666666666667,0.705882352941177}
\definecolor{color3}{rgb}{0.83921568627451,0.152941176470588,0.156862745098039}
\definecolor{color2}{rgb}{0.172549019607843,0.627450980392157,0.172549019607843}

\begin{axis}[
legend cell align={left},
legend entries={{ReLU network},{NTK regime},{Gaussian features},{ReLU features}},
legend style={at={(0.97,0.03)}, anchor=south east, draw=white!80.0!black},
tick align=outside,
tick pos=left,
title={k=1},
x grid style={lightgray!92.02614379084967!black},
xlabel={epoch},
xmajorgrids,
xmin=-0.95, xmax=19.95,
y grid style={lightgray!92.02614379084967!black},
ylabel={accuracy},
ymajorgrids,
ymin=0.45, ymax=1.0,
ytick={0.5,0.6,0.7,0.8,0.9,1.0},
width=1\linewidth,
height=0.8\linewidth
]
\addlegendimage{no markers, thick, color0}
\addlegendimage{no markers, thick, color1}
\addlegendimage{no markers, thick, color2}
\addlegendimage{no markers, thick, color3}
\addplot [thick, color0]
table [row sep=\\]{%
0	0.9751 \\
1	0.9806 \\
2	0.9857 \\
3	0.9844 \\
4	0.9882 \\
5	0.9877 \\
6	0.9882 \\
7	0.9895 \\
8	0.9857 \\
9	0.9897 \\
10	0.9879 \\
11	0.99 \\
12	0.9864 \\
13	0.9896 \\
14	0.9899 \\
15	0.9893 \\
16	0.989 \\
17	0.9902 \\
18	0.9877 \\
19	0.9905 \\
};
\addplot [thick, color1]
table [row sep=\\]{%
0	0.9633 \\
1	0.9634 \\
2	0.9765 \\
3	0.9562 \\
4	0.9815 \\
5	0.983 \\
6	0.979 \\
7	0.9814 \\
8	0.9827 \\
9	0.9816 \\
10	0.9817 \\
11	0.982 \\
12	0.9822 \\
13	0.9821 \\
14	0.9815 \\
15	0.9824 \\
16	0.9798 \\
17	0.9821 \\
18	0.9824 \\
19	0.9821 \\
};
\addplot [thick, color2]
table [row sep=\\]{%
0	0.92 \\
1	0.9342 \\
2	0.9372 \\
3	0.9363 \\
4	0.9396 \\
5	0.9391 \\
6	0.941 \\
7	0.9405 \\
8	0.9415 \\
9	0.9406 \\
10	0.9385 \\
11	0.9385 \\
12	0.941 \\
13	0.9409 \\
14	0.9401 \\
15	0.9402 \\
16	0.9406 \\
17	0.942 \\
18	0.9396 \\
19	0.9406 \\
};
\addplot [thick, color3]
table [row sep=\\]{%
0	0.8904 \\
1	0.9103 \\
2	0.9222 \\
3	0.9257 \\
4	0.9301 \\
5	0.929 \\
6	0.9318 \\
7	0.9342 \\
8	0.9339 \\
9	0.9361 \\
10	0.9363 \\
11	0.9368 \\
12	0.9373 \\
13	0.9369 \\
14	0.9375 \\
15	0.9382 \\
16	0.9384 \\
17	0.9379 \\
18	0.9386 \\
19	0.9386 \\
};
\end{axis}

\end{tikzpicture}
      \end{minipage}
      \hspace{0.05\linewidth}
      \begin{minipage}{0.45\linewidth}
\begin{tikzpicture}
\large
\definecolor{color1}{rgb}{1,0.498039215686275,0.0549019607843137}
\definecolor{color0}{rgb}{0.12156862745098,0.466666666666667,0.705882352941177}
\definecolor{color3}{rgb}{0.83921568627451,0.152941176470588,0.156862745098039}
\definecolor{color2}{rgb}{0.172549019607843,0.627450980392157,0.172549019607843}

\begin{axis}[
tick align=outside,
tick pos=left,
title={k=3},
x grid style={lightgray!92.02614379084967!black},
xlabel={epoch},
xmajorgrids,
xmin=-0.95, xmax=19.95,
y grid style={lightgray!92.02614379084967!black},
ylabel={accuracy},
ymajorgrids,
ymin=0.45, ymax=1.0,
ytick={0.5,0.6,0.7,0.8,0.9,1.0},
width=1\linewidth,
height=0.8\linewidth
]
\addlegendimage{no markers, thick, color0}
\addlegendimage{no markers, thick, color1}
\addlegendimage{no markers, thick, color2}
\addlegendimage{no markers, thick, color3}
\addplot [thick, color0]
table [row sep=\\]{%
0	0.5888 \\
1	0.6552 \\
2	0.6848 \\
3	0.7104 \\
4	0.7157 \\
5	0.7294 \\
6	0.7369 \\
7	0.7491 \\
8	0.7473 \\
9	0.7483 \\
10	0.7667 \\
11	0.7671 \\
12	0.7703 \\
13	0.7794 \\
14	0.7829 \\
15	0.7886 \\
16	0.7799 \\
17	0.7883 \\
18	0.7924 \\
19	0.785 \\
};
\addplot [thick, color1]
table [row sep=\\]{%
0	0.5018 \\
1	0.5079 \\
2	0.5096 \\
3	0.5126 \\
4	0.5208 \\
5	0.5123 \\
6	0.5052 \\
7	0.5211 \\
8	0.5207 \\
9	0.5278 \\
10	0.5229 \\
11	0.5244 \\
12	0.5233 \\
13	0.5241 \\
14	0.5203 \\
15	0.5215 \\
16	0.5456 \\
17	0.5253 \\
18	0.5266 \\
19	0.5268 \\
};
\addplot [thick, color2]
table [row sep=\\]{%
0	0.4964 \\
1	0.5032 \\
2	0.5009 \\
3	0.5044 \\
4	0.5041 \\
5	0.4959 \\
6	0.5017 \\
7	0.5058 \\
8	0.499 \\
9	0.5105 \\
10	0.5079 \\
11	0.4959 \\
12	0.509 \\
13	0.5008 \\
14	0.5064 \\
15	0.5095 \\
16	0.509 \\
17	0.5068 \\
18	0.4983 \\
19	0.5081 \\
};
\addplot [thick, color3]
table [row sep=\\]{%
0	0.4943 \\
1	0.502 \\
2	0.4959 \\
3	0.5023 \\
4	0.4965 \\
5	0.499 \\
6	0.5034 \\
7	0.4986 \\
8	0.5047 \\
9	0.5021 \\
10	0.4939 \\
11	0.5043 \\
12	0.497 \\
13	0.5008 \\
14	0.5046 \\
15	0.5043 \\
16	0.5109 \\
17	0.497 \\
18	0.5086 \\
19	0.4924 \\
};
\end{axis}

\end{tikzpicture}
      \end{minipage}
      \includegraphics[scale=0.2,trim={0 4cm 0 4cm},clip]{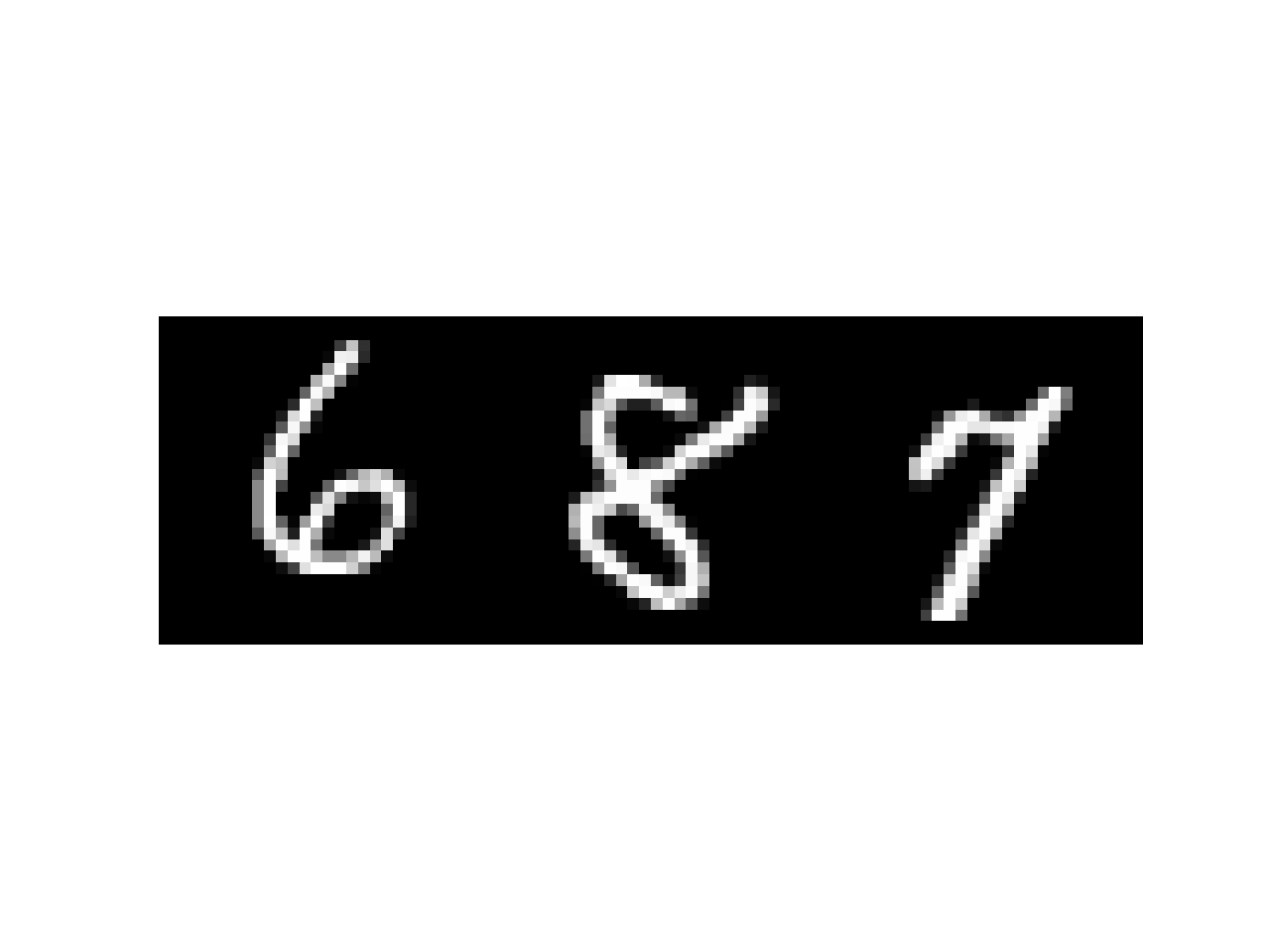}
      \includegraphics[scale=0.2,trim={0 4cm 0 4cm},clip]{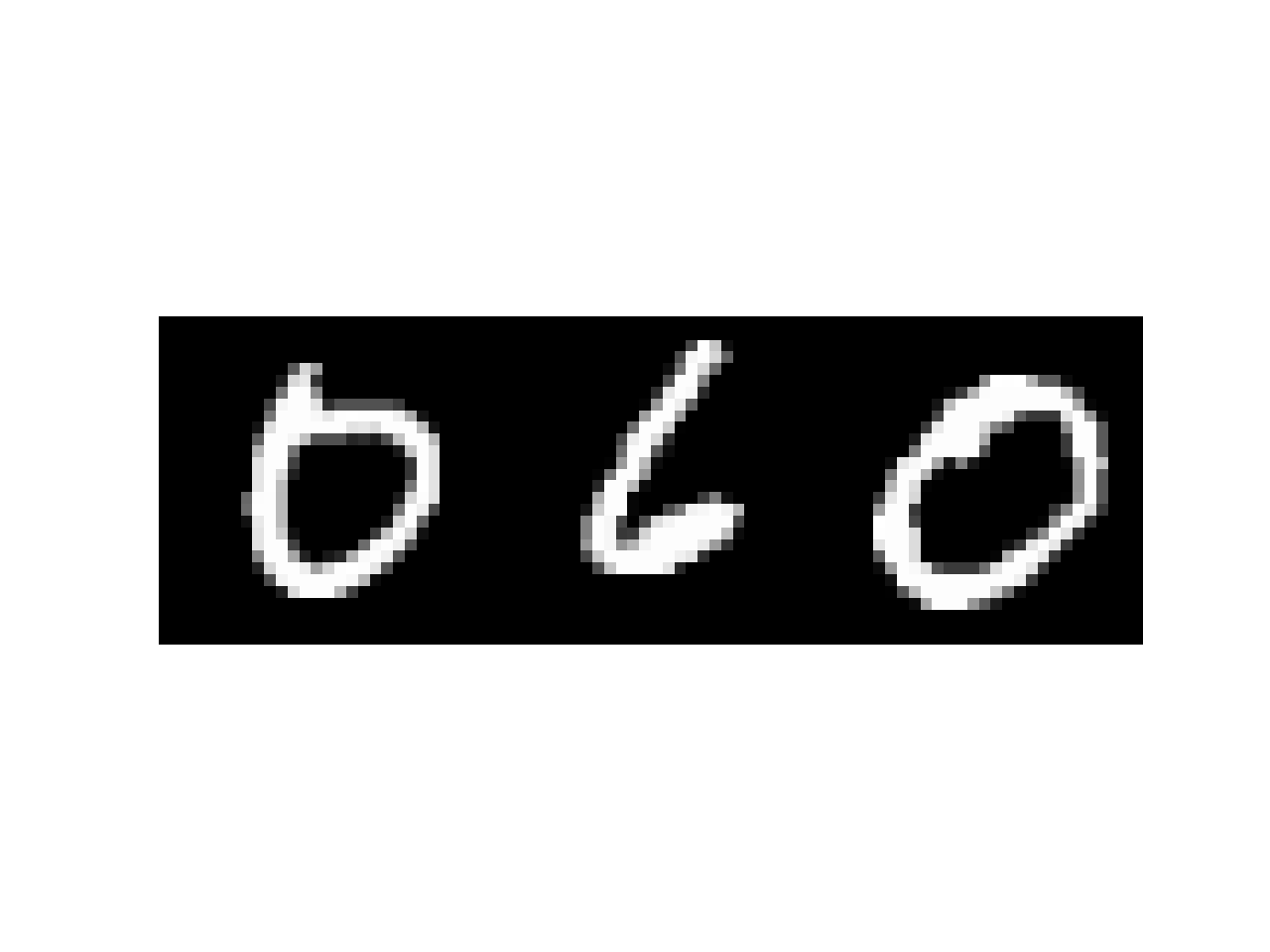}

	\caption{MNIST-parity experiment. Top left: test performance on parity of a single MNIST image. Top right: test performance on parity of three MNIST images. Bottom: examples for the MNIST-parity experiment, the model has to predict the parity of the digits sum.}\label{fig:experiment}
\end{figure}

\subsection{Experiment Details}
In the MNIST-parity experiment we train a neural-network model, as well as various linear models, to predict the parity of the sum of digits in the strip. Our neural-network architecture is a one-hidden layer network with ReLU activation and $512$ neurons in the hidden layer. We compare this network to a network of a similar architecture, except that we force the network to stay in the regime of the linear approximation induced by the network's gradient - i.e., the neural-tangent-kernel (NTK) regime \cite{jacot2018neural}. To do this, we use an architecture that decouples the gating from the linearity of the ReLU funcion, and keeps the gates fixed throughout the training process (as suggested in \cite{fiat2019decoupling}). Namely, we use the fact that: $\textrm{ReLU}(\inner{\bw,\x}) = \inner{\bw,\x} \cdot \ind \{\inner{\bw,\x}\}$, and by decoupling the first and second term during the optimization, we force the network to stay in the NTK regime. We compare these architectures to standard random-features models, where we randomly initialize the first layer, but train only the second layer. Such models are known to be an efficient method for approximating kernel-SVM (see \cite{rahimi2008random}). We use both ReLU random-features (standard random initialization with ReLU activation), and Gaussian random features, which approximate the RBF kernel. Both models have 512 features in the first layer. All models are trained with AdaDelta optimizer, for 20 epochs, with batch size 128.

\section{Discussion and Future Work}
In this work we showed exponential separation between learning neural networks with gradient-descent and learning linear models - i.e., learning linear separators over fixed representation of the data. This shows that learning neural networks is a strictly stronger learning model than any linear model, including linear classifiers, kernel methods and random features. In other words, neural networks are not just ``glorified'' kernel methods, as might be implied from previous works in the field. This demonstrates that our current understanding of neural networks learning is very limited, as only a few works so far have given positive results beyond the linear case.

There are various open questions which we leave for future work. The first immediate research direction is to find other distribution families that are learnable with neural networks via gradient-descent, but not using linear models. Another interesting question is finding distribution families with separation between deep and shallow networks. Specifically, finding a family of distributions that are learnable with gradient-descent using depth-three networks, but cannot be learned using depth-two networks. Finally, we believe that understanding the behavior of neural networks trained on specific ``non-linear'' distribution families will allow us to induce specific properties of the distributions that make them learnable using neural networks. Characterizing such distributional properties is another promising direction for future research.

\newpage

\section{Broader Impact}
As the primary focus of this paper is on theoretical results and theoretical analysis, a Broader Impact discussion is not applicable.

\bibliography{bib}
\bibliographystyle{plain}

\newpage
\appendix

\section{Additional Proof Details}

\begin{proof} of Lemma \ref{lem:bound_uniform_one_coord}.
Fix some $\bw$. Denote $h(\x) =  x_{j}  \cdot \sigma'(\bw^\top \x + b > 0)$. Let $A' \subseteq [n]$ be some subset with $\abs{A'} = k$ and $j \notin A'$.
\[
\E x_{j} f_{A'}(\x) \cdot \sigma'(\bw^\top \x + b > 0)  = \hat h(A') 
\]
Now, we have
\[
\E_{A'} \left|\hat h(A')\right|^2 = \frac{1}{\binom{n-1}{k}}\sum_{A'\in \binom{[n-1]}{k} }\left|\hat h(A')\right|^2 \le  \frac{\|h\|^2_2}{\binom{n-1}{k}} \le \frac{1}{\binom{n-1}{k}}
\]
Finally,
\[
\E_{A'} \left|\hat h(A')\right| \le \sqrt{\E_{A'} \left|\hat h(A)\right|^2} \le   \sqrt{\frac{1}{\binom{n-1}{k}}}
\]
Since the above holds for all $\bw$, we get that:
\[
\E_{A'} \E_{\bw} \abs{\hat h (A')} = \E_{\bw} \E_{A'} \abs{\hat h (A')} \le \sqrt{\frac{1}{\binom{n-1}{k}}}
\]
Fix some $A' \subseteq [n]$ (with $\abs{A'} = k$ and $j \notin A'$), and observe that, from symmetry to permutations of the uniform distribution, we have:
\begin{align*}
\E_\bw \abs{\hat h (A')} &= \E_{\bw} \abs{\E_\x x_j f_{A'}(\x) \cdot \sigma'(\bw^\top \x + b> 0)} \\
&= \E_{\bw} \abs{\E_\x x_j f_{A}(\x) \cdot \sigma'(\bw^\top \x + b> 0)} = \E_\bw \abs{\hat h (A)}
\end{align*}
And therefore, we get that:
$\E_\bw \abs{\hat h (A)}
= \E_{A'} \E_{\bw} \abs{\hat h (A')} \le \sqrt{\frac{1}{\binom{n-1}{k}}}$.
Now, using Markov's inequality achieves the required.
A similar calculation is valid for $h(\x) =   \sigma'(\bw^\top \x + b > 0)$.
\end{proof}

\begin{proof} of Lemma \ref{lem:zero_gradient}.
W.l.o.g., assume $A = [k]$ and $j=k+1$. We will show that the conclusion of the lemma is true even if we condition of the value of $x_{k+1},\ldots,x_n$. Indeed, in that case the conditional expectation of $x_{j} f(\x) \cdot \sigma'(\bw^\top \x + b > 0)$ is
\begin{eqnarray*}
&& \frac{1}{2}x_{k+1} f(1,\ldots, 1,x_{k+1}\ldots, x_k) \cdot \sigma' \left(  \sum_{i=1}^k w_i  +  \sum_{i=k+1}^n w_i x_i + b > 0\right)
\\
&& +  \frac{1}{2}x_{k+1} f(-1,\ldots, -1,x_{k+1}\ldots, x_k) \cdot \sigma' \left( \sum_{i=1}^k -w_i  + \sum_{i=1}^n w_i x_i + b > 0\right)
\\
&=& \frac{1}{2}x_{k+1} \cdot \sigma' \left(   \sum_{i=k+1}^n w_i x_i + b > 0\right) 
\\
&& -  \frac{1}{2}x_{k+1} \cdot \sigma' \left( \sum_{i=k+ 1}^n w_i x_i + b > 0\right)
\\
&=& 0
\end{eqnarray*}
Similarly, the conditional expectation of $f(\x) \cdot \sigma'(\bw^\top \x + b > 0)$ is
\begin{eqnarray*}
&& \frac{1}{2} f(1,\ldots, 1,x_{k+1}\ldots, x_k) \cdot \sigma' \left(  \sum_{i=1}^k w_i  +  \sum_{i=k+1}^n w_i x_i + b > 0\right)
\\
&& +  \frac{1}{2}f(-1,\ldots, -1,x_{k+1}\ldots, x_k) \cdot \sigma' \left( \sum_{i=1}^k -w_i  + \sum_{i=1}^n w_i x_i + b > 0\right)
\\
&=& \frac{1}{2}\cdot \sigma' \left(   \sum_{i=k+1}^n w_i x_i + b > 0\right)
\\
&& -  \frac{1}{2}\cdot \sigma' \left( \sum_{i=k+ 1}^n w_i x_i + b > 0\right)
\\
&=& 0
\end{eqnarray*}
\end{proof}

\begin{proof} of Lemma \ref{lem:good_coord_approx}.
Fix some $y \in \{\pm 1\}$. Denote $\hat{S}$ to be the random variable
$\hat{S} := \sum_{j \notin A} w_j x_j = \sum_{j \in J} w_j x_j$.
Notice that for every $y \in \{\pm 1\}$, the following holds:
\begin{align*}
\prob{h(\x) = y \wedge \sigma'(\bw^\top \x + b) = 1} &\le
\prob{h(\x) = y \wedge \hat{S} + b \in (\frac{k}{\sqrt{n}}, 6-\frac{k}{\sqrt{n}})} \\
&+ \prob{h(\x) = y \wedge \hat{S}+b \in (-\frac{k}{\sqrt{n}}, \frac{k}{\sqrt{n}}] \cup [6-\frac{k}{\sqrt{n}}, 6)} \\
&= \prob{h(\x) = y} \prob{\hat{S} + b \in (\frac{k}{\sqrt{n}}, 6-\frac{k}{\sqrt{n}})} \\
&+ \prob{h(\x) = y \wedge \hat{S}+b \in (-\frac{k}{\sqrt{n}}, \frac{k}{\sqrt{n}}] \cup [6-\frac{k}{\sqrt{n}}, 6)} \\
\end{align*}
Where we use the fact that $h(\x)$ is independent from every $x_j$ with $j \notin A$.
Since $\{\sqrt{n}x_j\}_{j \in J}$ are Rademacher random variables, from Littlewood-Offord there exists a universal constant $B$ such that $\prob{\hat{S} \in I} \le \frac{B}{\sqrt{\abs{J}}}$, for every open interval $I$ of length $\frac{1}{\sqrt{n}}$. 
Using the union bound we get that $\prob{\hat{S}+b \in (-\frac{k}{\sqrt{n}}, \frac{k}{\sqrt{n}}] \cup [6-\frac{k}{\sqrt{n}}, 6)} \le \frac{3k+2}{\sqrt{\abs{J}}}$.
Therefore, we get the following:
\begin{align*}
&\abs{\prob{h(\x) = y \wedge \sigma'(\bw^\top \x +b) = 1} - \prob{h(\x) = y}\prob{\hat{S} + b \in (\frac{k}{\sqrt{n}}, 6-\frac{k}{\sqrt{n}})}} \\
&\le \prob{h(\x) = y \wedge \hat{S}+b \in (-\frac{k}{\sqrt{n}}, \frac{k}{\sqrt{n}}] \cup [6-\frac{k}{\sqrt{n}}, 6)} \\
&= \prob{h(\x) = y} \prob{ \hat{S}+b \in (-\frac{k}{\sqrt{n}}, \frac{k}{\sqrt{n}}] \cup [6-\frac{k}{\sqrt{n}}, 6)} \\
&\le \prob{h(\x) = y} \frac{(3k+1)B}{\sqrt{\abs{J}}}
\end{align*}
Since the above is true for every $y \in \{\pm 1\}$, we get that:
\begin{align*}
&\abs{\mean{h(\x) \cdot \sigma'(\bw^\top \x +b)} - \mean{h(\x)}\prob{\hat{S} + b \in (\frac{k}{\sqrt{n}}, 6-\frac{k}{\sqrt{n}})}} \\
&= \abs{\sum_{y \in \{\pm 1\}} y \prob{h(\x) = y \wedge \sigma'(\bw^\top \x +b) = 1} - \sum_{y \in \{\pm 1\}} y\prob{h(\x) = y}\prob{\hat{S} + b \in (\frac{k}{\sqrt{n}}, 6-\frac{k}{\sqrt{n}})}} \\
&\le \sum_{y \in \{\pm 1\}} \abs{\prob{h(\x) = y \wedge \sigma'(\bw^\top \x +b) = 1} - \prob{h(\x) = y}\prob{\hat{S} + b \in (\frac{k}{\sqrt{n}}, 6-\frac{k}{\sqrt{n}})}} \\
&\le \frac{(3k+1)B}{\sqrt{\abs{J}}} \sum_{y \in \{\pm 1\}} \prob{h(\x) = y}
=  \frac{(3k+1)B}{\sqrt{\abs{J}}}
\end{align*}
And this gives the required.
\end{proof}

\begin{proof} of Lemma \ref{lem:neuron_first_step}.
Denote $\bw := \bw^{(0)}_i$, $b := b^{(0)}_i$.
We show that with probability at least $\frac{1}{14\sqrt{k}}$ over the choice of $\bw_i^{(0)}$ we have:
\begin{enumerate}
\item $\left| \E_\x x_{j} f(\x) \cdot \sigma'(\bw^\top \x + b) \right|  \le  14\sqrt{k}(n-1)\sqrt{\frac{1}{\binom{n-1}{k}}}$

$\left| \E_\x  f(\x) \cdot \sigma'(\bw^\top \x + b) \right|  \le  14\sqrt{k}(n-1)\sqrt{\frac{1}{\binom{n-1}{k}}}$
\item $\sum_{j \in A} w_j = 0$
\item $\abs{J} := \abs{\{j \in [n] \setminus A ~:~ w_j \ne 0\}} \ge \frac{n-k}{3}$
\end{enumerate}
We start by calculating the probability to get each of the above separately:
\begin{enumerate}
\item From Lemma \ref{lem:bound_uniform_all_coords}, this holds with probability at least $1-\frac{1}{14\sqrt{k}}$.
\item Denote $A_0 = \{j \in A ~|~w_j = 0\}$. Now, to calculate the probability that 2 holds, we start by noting that it can hold only when $\abs{A_0}$ is odd (since $k$ is odd). Now, note that $\prob{w_j = 0} = \frac{1}{3}$ independently for every coordinate. Therefore, we have the following:
\begin{align*}
&((1-\frac{1}{3})+\frac{1}{3})^k = \prob{\abs{A_0} ~is~even} + \prob{\abs{A} ~is~odd}\\
&((1-\frac{1}{3})-\frac{1}{3})^k = \prob{\abs{A_0} ~is~even} - \prob{\abs{A} ~is~odd} \\
&\Rightarrow \prob{\abs{A_0} ~is~odd} = \frac{1}{2} - \frac{1}{2}(\frac{1}{3})^k
\ge \frac{1}{3}
\end{align*}
Now, conditioning on the event that $\abs{A_0}$ is odd, we have:
\[
\prob{\sum_{j \in A} w_j=0 } = \frac{1}{2^{k-\abs{A_0}}} \binom{k-\abs{A_0}}{\frac{1}{2}(k-\abs{A_0})} \ge \frac{1}{2\sqrt{k-\abs{A_0}}} \ge \frac{1}{2\sqrt{k}}
\]
All in all, we get that 2 holds with probability at least $\frac{1}{6\sqrt{k}}$.
\item Denote $X_j = \ind \{w_j \ne 0\}$, and note that we have $\mean{\sum_{j \notin A} X_j} = \frac{2(n-k)}{3}$. Then, from Hoeffding's inequality we get that $\prob{\abs{J} \le \frac{n-k}{3}} \le \exp (-\frac{2}{9}(n-k)) \le \frac{1}{7}$, since we assume $n-k \ge \frac{9}{2}\log7$.
\end{enumerate}
To calculate the probability that both 1,2 and 3 hold, note that 2 and 3 are independent, and therefore the probability that both of them hold is at least $\frac{1}{7 \sqrt{k}}$. Using the union bound we get that the probability that all 1-3 hold is at least $\frac{1}{14 \sqrt{k}}$.

Now, we assume that the above hold. In this case we have:
\begin{align*}
\abs{b^{(1)}_i-b^{(0)}_i} &= \abs{\eta_1 \frac{\partial}{\partial b_i} L_\mathcal{D} (g^{(0)})} \\
&= \abs{\mean{\ell'(f_A(\x),g^{(0)}(\x)) \frac{\partial}{\partial b_i} g^{(0)}(\x)}} \\
&= \abs{u_i^{(0)}}\abs{\frac{1}{2}\E_{\mathcal{D}_A^{(1)}} f_A(\x) \cdot \sigma'(\bw^\top \x + b) - \frac{1}{2}\E_{\mathcal{D}_A^{(2)}} f_A(\x) \cdot \sigma'(\bw^\top \x + b)} \\
&= \frac{n}{2k}\left|\E_{\mathcal{D}_A^{(1)}}  f(\x) \cdot \sigma'(\bw^\top \x + b) \right|  \\
&\le  7\frac{n(n-1)}{\sqrt{k}}\sqrt{\frac{1}{\binom{n-1}{k}}}
\le 7(n-1)^2\sqrt{\frac{1}{\binom{n-1}{5}}}
\le 7(n-1)^2\sqrt{\left(\frac{5}{n-1}\right)^5} \le \frac{\sqrt{2} \cdot 7 \cdot 5^{2.5}}{\sqrt{n}}
\end{align*}
Where we use the result of Lemma \ref{lem:zero_gradient} and the above conditions.
Now, for  all $j \in [n]$ we have:
\begin{align*}
w^{(1)}_{i,j} &= w^{(0)}_{i,j} - \eta_1 \left(\frac{\partial}{\partial w_{i,j}} L_\mathcal{D} (g^{(0)}) + \lambda_1 R(g^{(0)}) \right)\\
&= w^{(0)}_i - \mean{\ell'(f_A(\x), g^{(0)}(\x)) \frac{\partial}{\partial w_{i,j}} g^{(0)}(\x)} - \frac{1}{2} \frac{\partial}{\partial w_{i,j}^{(0)}} R(g^{(0)}) \\
&= -u_i^{(0)}\mean{x_j f_A(\x) \sigma'(\bw^\top \x + b)}
\end{align*}
So, denote $h(\x) = \sqrt{n}x_j f_A(\x)$ and note that for every $j \in A$ we get $h(\x) \equiv 1$. So, from Lemma \ref{lem:good_coord_approx} we get that for every $j \in A$ we have:
\begin{align*}
\abs{w^{(1)}_{i,j} - \varphi(\bw,b)\frac{u_i^{(0)}}{\sqrt{n}}} &= \abs{\frac{u_i^{(0)}}{\sqrt{n}}} \abs{\E_{\mathcal{D}_A} h(\x) \sigma'(\bw^\top \x + b) - \varphi(\bw,b)\E_{\mathcal{D}_A} h(\x)} \\
&\le \frac{C_1 \sqrt{n}}{\sqrt{\abs{J}}} \le \frac{\sqrt{3} C_1 \sqrt{n}}{\sqrt{(n-k)}} \le \sqrt{6} C_1
\end{align*}
Now, let $\alpha_i = \varphi(\bw,b)$ and recall that $\varphi(\bw,b) = \prob{\frac{k}{\sqrt{n}} < \sum_{j \in J} w_j x_j +b< 6-\frac{k}{\sqrt{n}} }$, and since $\frac{k}{\sqrt{n}} \le \frac{1}{8k}$ and $b=\frac{1}{8k}$ we have:
\[
\alpha_i \ge \prob{0 \le \sum_{j \in J} w_j x_j < 5} =  \frac{1}{2} - \prob{\sum_{j \in J} w_j x_j > 5}
\]
From Markov's inequality we have: $\prob{\abs{\sum_{j \in J} w_j x_j} > 5} \le \frac{1}{5^2}$.
And from symmetry we get that $\prob{\sum_{j \in J} w_j x_j > 5}\le \frac{1}{2\cdot 5^2}\le \frac{1}{4}$, and so $\alpha_i \ge \frac{1}{4}$.
Finally, for every $j \notin A$, using Lemma \ref{lem:zero_gradient} we get:
\begin{align*}
\abs{w^{(1)}_{i,j}}
&= \abs{u_i^{(0)}}\abs{\frac{1}{2}\E_{\mathcal{D}_A^{(1)}} x_j f_A(\x) \sigma'(\bw^\top \x + b) + \frac{1}{2}\E_{\mathcal{D}_A^{(2)}} x_j f_A(\x) \sigma'(\bw^\top \x + b)} \\
&= \frac{n}{2k}\abs{\E_{\mathcal{D}_A^{(1)}} x_j f_A(\x) \sigma'(\bw^\top \x)} \le 7\frac{n(n-1)}{\sqrt{k}}\sqrt{\frac{1}{\binom{n-1}{k}}} \\
&\le \frac{7}{n-1} (n-1)^3 \sqrt{\frac{1}{\binom{n-1}{6}}}
\le \frac{7}{n-1} (n-1)^3 \sqrt{\frac{6^6}{(n-1)^6}} \le 7 \cdot \frac{6^3}{n-1}
\end{align*}
\end{proof}

\begin{proof} of Lemma \ref{lem:neuron_good_u}.
Denote $u^* = -\frac{bn}{\alpha r}$, and let $\epsilon' = \frac{bn}{\alpha \abs{r}k} \epsilon$. Notice that $\abs{u^*} \le \frac{n}{2k}$ so $[u^* - \epsilon', u^* + \epsilon'] \subset [-\frac{n}{k}, \frac{n}{k}]$. Therefore, we get that $\prob{\abs{u-u^*} \le \epsilon'} = \frac{\epsilon'k}{n} = \frac{b \epsilon}{\alpha \abs{r}} \ge \frac{1}{8k^2} \epsilon$. Notice that:
\[
\phi_r(z) = \frac{\abs{r}}{b} \sigma(\frac{\alpha}{n}u^*z+b)
\]
And therefore:
\[
\abs{\frac{\abs{r}}{b} \sigma(\frac{\alpha}{n} uz +b) - \phi_r(z)}
= \frac{\abs{r}}{b}\abs{\sigma(\frac{\alpha}{n} uz +b) - \sigma(\frac{\alpha}{n}u^*z+b)}
\le \frac{\abs{rz}\alpha}{bn} \abs{u-u^*} \le \frac{r\alpha k}{bn}\epsilon' = \epsilon
\]
\end{proof}

\begin{proof} of Lemma \ref{lem:good_neuron_approx}.
From Lemma \ref{lem:neuron_first_step},  with probability at least $\frac{1}{14\sqrt{k}}$ over the choice of $\bw_i^{(0)}$, we have that: $\max_{j \in A}\abs{w^{(0)}_{i,j} - \frac{\alpha_i}{\sqrt{n}} u_i^{(0)}} \le C_1$, $\max_{j \notin A}\abs{w^{(0)}_{i,j} - \frac{\alpha_i}{\sqrt{n}} u_i^{(0)}} \le \frac{C_2}{n-1}$ and $\abs{b^{(1)}-b^{(0)}} \le \frac{C_3}{\sqrt{n}}$ for some universal constants $C_1, C_2, C_3$, and some $\alpha_i \in \left[\frac{1}{4}, 1\right]$ depending only on $\bw^{(0)}_i$.
From Lemma \ref{lem:neuron_good_u}, with probability at least $\frac{\epsilon}{8k^2}$ over the choice of $u_i^{(0)}$ (and independently of the choice of $\bw_i^{(0)}$), we have $\abs{\frac{\abs{r}}{b_i^{(0)}} \sigma(\frac{\alpha}{n} u_i^{(0)} z + b_i^{(0)})-\phi_r(z)} \le \epsilon$ for every $z \in [-k,k]$.

Assume the results of both lemmas hold, which happens with probability at least $\frac{\epsilon}{112k^{2.5}}$.
Now, fix some $\x \in \mathcal{X}$ and let $z = \sqrt{n} \sum_{j \in A} x_j \in [-k,k]$. Then we have:
\begin{align*}
\abs{\widehat{\psi}_i(\x) - \psi_r(\x)}
&= \abs{\frac{\abs{r}}{b_i^{(0)}} \sigma \left( \inner{\bw_i^{(1)},\x} + b_i^{(1)} \right) - \sigma\left(-\sign(r) z + \abs{r}\right)} \\
&= \frac{\abs{r}}{b_i^{(0)}} \abs{ \sigma \left( \inner{\bw_i^{(1)},\x} + b_i^{(1)} \right) - \sigma\left(\frac{\alpha_i}{n} u_i^{(0)}z + b_i^{(0)}\right)}\\
&+ \abs{\frac{\abs{r}}{b_i^{(0)}} \sigma\left(\frac{\alpha_i}{n} u_i^{(0)}z + b_i^{(0)}\right) - \sigma\left(-\sign(r) z + \abs{r}\right)}\\
\end{align*}
From the result of Lemma \ref{lem:neuron_first_step}:
\begin{align*}
&\abs{ \sigma \left( \inner{\bw_i^{(1)},\x} + b_i^{(1)} \right) - \sigma\left(\frac{\alpha_i}{\sqrt{n}} u_i^{(0)}z + b_i^{(0)}\right)} \\
&\le \abs{ \inner{\bw_i^{(1)},\x} + b_i^{(1)} - \frac{\alpha_i}{n} u_i^{(0)}z + b_i^{(0)}} \\
&\le \abs{\inner{\bw_i^{(1)}, \x} - \frac{\alpha_i}{\sqrt{n}} u_i^{(0)} \sum_{j \in A} x_j} + \abs{b_i^{(1)}-b_i^{(0)}} \\
&\le \sum_{j \in A} \abs{w_{i,j}^{(1)} x_j - \frac{\alpha_i}{\sqrt{n}}u_i^{(0)} x_j} + \sum_{j \notin A} \abs{w_{i,j}^{(1)} x_j} + \abs{b_i^{(1)}- b_i^{(0)}} \\
&\le \frac{kC_1}{\sqrt{n}} + \frac{C_2}{\sqrt{n}} + \frac{C_3}{\sqrt{n}}
\end{align*}
Using the result of Lemma \ref{lem:neuron_good_u} we get that:
\begin{align*}
\abs{\widehat{\psi}_i(\x) - \psi_r(\x)} \le \frac{\abs{r}}{b_i^{(0)}} \left( \frac{C_1 k + C_2 + C_3}{\sqrt{n}} \right) + \epsilon \le \frac{C_4 k^4}{\sqrt{n}} + \epsilon
\end{align*}
For some universal constant $C_4$. Using the assumption on $k$ concludes the proof.
\end{proof}

\begin{proof} of Lemma \ref{lem:good_separator}.
Fix some $r \in \{-k,-k+2, \dots, k-2, k\}$. Let $\epsilon = \frac{1}{10k}$, and from Lemma \ref{lem:good_neuron_approx}, with probability at least $\frac{1}{1120k^{3.5}}$ over the choice of $\bw_i^{(0)}, u_i^{(0)}$ we have:
\[
\abs{\widehat{\psi}_i(\x) - \psi_r(\x)} \le \frac{1}{10k}
\]
Assume $q \ge 2 \cdot 1120^2k^7 \log (\frac{k+1}{\delta})$.
Denote $I_r = \{i \in [q] ~:~ \abs{\widehat{\psi}_i(\x) - \psi_r(\x)} \le \frac{1}{10k}\}$.
Denote $p := \frac{1}{1120k^{3.5}}$,
and using Hoeffding's inequality, with probability at least $1-\exp\{-\frac{p^2}{2} q\} \ge1- \frac{\delta}{k+1}$ we have $\abs{I_r} \ge \frac{p}{2}q$.
Therefore, using the union bound we get that with probability at least $1-\delta$, for every $r \in \{-k,-k+2, \dots, k-2,k\}$ we have $\abs{I_r} \ge \frac{p}{2}q$.
Let $J_r \subset I_r$ be some subset of size $\abs{J_r} = \frac{p}{2}q$.
Define:
\[v_r = \begin{cases} 1 & \abs{r} = k \\ 2.5 & \abs{r} = 1 \\ 2 & 1 < \abs{r} < k \end{cases}\]
Observe that $\sum_r (-1)^{\frac{k-r}{2}} v_r \psi_r(\x) = f_A(\x)$. Therefore, we have that:
\begin{align*}
\abs{\frac{2}{pq} \sum_r \sum_{i \in J_r} (-1)^{\frac{k-r}{2}} v_r \widehat{\psi}_i(\x) - f_A(\x)} &=
\abs{\frac{2}{pq} \sum_r \sum_{i \in J_r} (-1)^{\frac{k-r}{2}} v_r \widehat{\psi}_i(\x) - \sum_r (-1)^{\frac{k-r}{2}} v_r \psi_r(\x)} \\
&\le \frac{2}{pq} \sum_r \sum_{i \in J_r} \abs{v_r} \abs{\widehat{\psi}_i(\x) - \psi_r(\x)} \\
&\le 2.5(k+1)\frac{1}{10k} \le \frac{1}{2} \\
\end{align*}
Define:
\[
u_i^* = \begin{cases} (-1)^{\frac{k-r}{2}} \frac{2v_r\abs{r}}{pqb_i^{(0)}} & \exists r ~s.t~ i \in J_r \\ 0 & o.w \end{cases}
\]
Now, we have $\abs{u_i} \le \frac{2}{pq} 10(k+1)k \le \frac{Bk^{5.5}}{q}$ where $B$ is a universal constant. Therefore, we get that $\norm{\bu^*} \le \sqrt{\frac{q(k+1)}{2240k^2}} \cdot \frac{Bk^{5.5}}{q} = B'\frac{k^5}{\sqrt{q}}$. From what we showed, such $\bu^*$ achieves the required.
\end{proof}

\begin{proof} of Theorem \ref{thm:ogd}.
We follow an analysis similar to \cite{Shalev12}. Let $R_t(\theta) = \sum_{i=1}^t \inner{\theta, \nabla f_i} + \frac{1}{2\eta} \norm{\theta}^2$, and notice that $\arg \min_{\theta} R_t = -\eta \sum_{i=1}^t \nabla f_i = \theta_{t+1} - \theta_1$.
We show by induction that for every $\theta^*$ we have:
\begin{equation}\label{eqn:ogd_1}
\sum_{t=1}^T \inner{\theta_{t+1} - \theta_1, \nabla f_t (\theta_t)} \le \sum_{t=1}^T \inner{\theta^*, \nabla f_t (\theta_t)} + \frac{1}{2 \eta} \norm{\theta^*}^2 = R_T(\theta^*)
\end{equation}
First, we have: 
\[
\inner{\theta_2 - \theta_1, \nabla f_t (\theta_t)} \le R_1(\theta_2 - \theta_1) \le R_1(\theta^*)
\]
since $\theta_2-\theta_1$ minimizes $R_1$. Now, assume the above is true for $T-1$, then we have:
\[
\sum_{t=1}^{T-1} \inner{\theta_{t+1} - \theta_1, \nabla f_t (\theta_t)} \le \sum_{t=1}^{T-1} \inner{\theta_{T+1} - \theta_1, \nabla f_t (\theta_t)}
\]
And by adding $\inner{\theta_{T+1} - \theta_1, \nabla f_T (\theta_T)}$ to both sides we get:
\[
\sum_{t=1}^{T} \inner{\theta_{t+1} - \theta_1, \nabla f_t (\theta_t)} \le \sum_{t=1}^{T} \inner{\theta_{T+1} - \theta_1, \nabla f_t (\theta_t)} \le R_T (\theta_{T+1} - \theta_1) \le R_T(\theta^*)
\]

Now, from \eqref{eqn:ogd_1} we get that:
\begin{align*}
\sum_{t=1}^T \inner{\theta_t - \theta_1, \nabla f_t (\theta_t)} - R_T(\theta^*)
&\le \sum_{t=1}^T \inner{\theta_t - \theta_1, \nabla f_t (\theta_t)} - \sum_{t=1}^T \inner{\theta_{t+1} - \theta_1, \nabla f_t (\theta_t)} \\
&= \sum_{t=1}^T \inner{\theta_t - \theta_{t+1}, \nabla f_t (\theta_t)}
= \eta \sum_{t=1}^T \norm{\nabla f_t(\theta_t)}^2
\end{align*}
Using Cauchy-Schwartz inequality and rearranging the above yields:
\begin{align*}
\sum_{t=1}^T \inner{\theta_t - \theta^*, \nabla f_t (\theta_t)}
&\le \frac{1}{2\eta} \norm{\theta^*}^2 + \norm{\theta_1}\sum_{t=1}^T \norm{\nabla f_t(\theta_t)} + \eta \sum_{t=1}^T \norm{\nabla f_t(\theta_t)}^2
\end{align*}
Finally, from convexity of $f_t$ we get:
\[
\sum_{t=1}^T (f_t(\theta_t) - f_t(\theta^*)) \le \sum_{t=1}^T \inner{\theta_t - \theta^*, \nabla f_t} \le \frac{1}{2\eta} \norm{\theta^*}^2 + \norm{\theta_1}\sum_{t=1}^T \norm{\nabla f_t(\theta_t)} + \eta \sum_{t=1}^T \norm{\nabla f_t(\theta_t)}^2
\]
\end{proof}

\begin{proof} of Lemma \ref{lem:bound_second_layer}.
W.l.o.g., assume $A = [k]$.
Denote $I_{even} := \{\z \in \{\pm \frac{1}{\sqrt{n}}\}^k ~:~ \prod_i z_i > 0\}$ and $I_{odd} := \{\z \in \{\pm \frac{1}{\sqrt{n}}\}^k ~:~ \prod_i z_i < 0\}$. Notice that since $k$ is odd, we have $I_{odd} = -I_{even}$.
From the symmetric initialization we have $g^{(0)} \equiv 0$. By definition of the gradient-updates, we have:
\begin{align*}
u^{(1)}_i 
&= u^{(0)}_i - \eta_1 \left(\frac{\partial}{\partial u_i} L_\mathcal{D}(g^{(0)}) + \lambda_1 \frac{\partial}{\partial u_i^{(0)}} R(g^{(0)})\right)\\
&= 	u^{(0)}_i - \mean{\ell'(f_A(\x), g^{(0)}(\x)) \frac{\partial}{\partial u_i} g^{(0)}(\x)} - \frac{1}{2} \frac{\partial}{\partial u_i^{(0)}} R(g^{(0)}) \\
&= - \mean{f_A(\x) \sigma(\inner{\bw_i^{(0)}, \x} + b)} \\
&= - \sum_{\z \in I_{even}}
\mean{\sigma(\inner{\bw_i^{(0)}, \x} + b)| \x_{1 \dots k} = \z} \prob{\x_{1 \dots k} = \z} \\
&+\sum_{\z \in I_{even}}
\mean{\sigma(\inner{\bw_i^{(0)}, \x} + b)| \x_{1 \dots k} -\z} \prob{\x_{1 \dots k} = -\z} \\
\end{align*}
Since by definition of the distribution $\mathcal{D}_A$ we have $\prob{\x_{1 \dots k}= \z} = \prob{\x_{1 \dots k} = -\z}$, we get that:
\begin{align*}
u_i^{(1)} &= \sum_{\z \in I_{even}} \prob{\x_{1 \dots k} = \z}\mean{\sigma(\sum_{j=1}^k w_{i,j}^{(0)} z_j + \sum_{j=k+1}^n w_{i,j}^{(0)} x_j + b)} \\
&-\sum_{\z \in I_{even}} \prob{\x_{1 \dots k} = \z}\mean{\sigma(-\sum_{j=1}^k w_{i,j}^{(0)} z_j + \sum_{j=k+1}^n w_{i,j}^{(0)} x_j + b)} 
\end{align*}
And since $\sigma$ is $1$-Lipschitz we get:
\begin{align*}
\abs{u_i^{(1)}}
&\le \sum_{\z \in I_{even}} \prob{\x_{1 \dots k} = \z} 2\abs{\sum_{j=1}^k w_{i,j}^{(0)} z_j} \le \frac{k}{\sqrt{n}} 2\sum_{\z \in I_{even}} \prob{\x_{1\dots k} = \z} = \frac{k}{\sqrt{n}}
\end{align*}
Where we use the fact that $\sigma$ is $1$-Lipschitz.
\end{proof}

\begin{proof} of Lemma \ref{lem:bound_weights_distance}.
From Lemma \ref{lem:bound_second_layer} we have that $\abs{u_i^{(1)}} \le \frac{k}{\sqrt{n}}$. For every $t > 1$:
\begin{align*}
\abs{u^{(t)}_i}
&= \abs{u^{(t-1)}_i - \eta \frac{\partial}{\partial u_i} L_\mathcal{D}(g^{(t-1)}) - \eta \lambda \frac{\partial}{\partial u_i} R(g^{(t-1)})} \\
&= \abs{u^{(t-1)}_i - \eta \mean{\ell'(f_A(\x), g^{(t-1)}(\x)) f_A(\x) \sigma(\inner{\bw_i^{(t-1)}, \x} + b^{(t-1)}_i)} - 2\eta \lambda u^{(t-1)}_i} \\
&\le \abs{(1-2\eta \lambda)u_i^{(t-1)} - 6\eta} \le \abs{u_i^{(t-1)}} + 6\eta
\le \dots \le \abs{u_i^{(1)}} + 6 \eta (t-1) \le 6 \eta t + \frac{k}{\sqrt{n}}
\end{align*}
Now, using the above we get that:
\begin{align*}
\norm{\bw^{(t)}_{i}-\bw^{(1)}_i} &= \norm{\bw^{(t)}_{i} - \eta \frac{\partial}{\partial w_{i}} L_\mathcal{D} (g^{(t-1)})- \eta \lambda \frac{\partial}{\partial w_{i}} R (g^{(t-1)})} \\
&= \norm{\bw^{(t-1)}_i - \bw^{(1)}_i - \eta\mean{\ell'(f_A(\x), g^{(t-1)}(\x)) u_i^{(t-1)} \sigma'(\bw^\top \x + b) \x} - 2\eta \lambda \bw^{(t-1)}_i} \\
&\le \norm{\bw^{(t-1)}_i - \bw^{(1)}_i - 2\eta \lambda \bw^{(t-1)}_i} + \eta \abs{u_i^{(t-1)}} \\
&\le  (1-2\eta\lambda)\norm{\bw^{(t-1)}_i - \bw^{(1)}_i} + 2 \eta \lambda \norm{\bw_i^{(1)}}+ 6\eta^2 t + \eta \frac{k}{\sqrt{n}}\\
&\le \norm{\bw^{(t-1)}_i - \bw^{(1)}_i} + 2 \eta \lambda \frac{n}{k}+ 6\eta^2 t + \eta \frac{k}{\sqrt{n}} \le \dots \le 2 \eta t \lambda \frac{n}{k}+ 6\eta^2 t^2 + \eta t\frac{k}{\sqrt{n}}
\end{align*}
Where we use the fact that:
\[
\norm{\bw^{(1)}_i} = \norm{\mean{\ell'(f_A(\x), g^{(0)}(\x)) u_i^{(0)} \sigma'(\bw^\top \x + b) \x}} \le \abs{u_i^{(0)}} \le \frac{n}{k}
\]
Finally, for the bias we get:
\begin{align*}
\abs{b^{(t)}_{i}-b^{(1)}_i} &= \abs{b^{(t)}_{i} - \eta \frac{\partial}{\partial b_{i}} L_\mathcal{D} (g^{(t-1)})} \\
&= \abs{b^{(t-1)}_i - b^{(1)}_i - \eta\mean{\ell'(f_A(\x), g^{(t-1)}(\x)) u_i^{(t-1)} \sigma'(\bw^\top \x + b)}} \\
&\le \abs{b^{(t-1)}_i - b^{(1)}_i} + \eta \abs{u_i^{(t-1)}} \\
&\le \abs{b^{(t-1)}_i - b^{(1)}_i} + 6 \eta^2 t+ \eta \frac{k}{\sqrt{n}} \le \dots \le 6 \eta^2 t^2 + \eta t \frac{k}{\sqrt{n}}
\end{align*}
\end{proof}

\begin{proof} of Lemma \ref{lem:loss_lip_bound}.
Denote the support of $u^*$ by $I := \{ i \in [2q] ~:~ u_i^* \ne 0\}$. Then we have:
\begin{align*}
\abs{\ell(g^{(t)}_{\bu^*}(\x),y) - \ell(g^{(1)}_{\bu^*}(\x),y)}
&\le \abs{g^{(t)}_{\bu^*}(\x)-g^{(1)}_{\bu^*}(\x)} \\
&= \abs{\sum_{i \in I} u_i^* \left(\sigma\left(\inner{\bw_i^{(t)},\x} + b_i^{(t)}\right)-\sigma\left(\inner{\bw_i^{(1)},\x} + b_i^{(1)}\right)\right)} \\
&\le \norm{\bu^*}_2 \sqrt{\abs{I}} \abs{\sigma\left(\inner{\bw_i^{(t)},\x} + b_i^{(t)}\right)-\sigma\left(\inner{\bw_i^{(1)},\x} + b_i^{(1)}\right)} \\
&\le \norm{\bu^*}_2 \sqrt{\abs{I}} \left(\abs{\inner{\bw_i^{(t)},\x}-\inner{\bw_i^{(1)},\x}} + \abs{b_i^{(t)}-b_i^{(1)}} \right) \\
&\le \norm{\bu^*}_2 \sqrt{\abs{I}} \left(\norm{\bw_i^{(t)}-\bw_i^{(1)}} + \abs{b_i^{(t)}-b_i^{(1)}} \right)
\end{align*}
Using Lemma \ref{lem:bound_weights_distance} we get:
\begin{align*}
\abs{\ell(g^{(t)}_{\bu^*}(\x),y) - \ell(g^{(1)}_{\bu^*}(\x),y)} \le \norm{\bu^*}_2 \sqrt{\abs{I}} \left(12\eta^2 t^2 + 2\eta t\frac{k}{\sqrt{n}} + 2 \eta t \lambda \frac{n}{k}\right)
\end{align*}
\end{proof}

\end{document}